\newtheorem{theorem}{Theorem}
\newtheorem{assumption}{Assumption}
\newtheorem{lemma}{Lemma}
\def\BibTeX{{\rm B\kern-.05em{\sc i\kern-.025em b}\kern-.08em
    T\kern-.1667em\lower.7ex\hbox{E}\kern-.125emX}}
\begin{document}
\renewcommand{\algorithmicrequire}{\textbf{Input:}}
\renewcommand{\algorithmicensure}{\textbf{Output:}}
\title{Towards Fair Graph Neural Networks via Graph Counterfactual without Sensitive Attributes

\thanks{
Tianlong Gu and Xuguang Bao are the corresponding authors.

This work was supported by the National Natural Science Foundation of China (Grant No. U22A2099 and Grant No. 62336003).}
}

\author{\IEEEauthorblockN{1\textsuperscript{st} Xuemin Wang}
\IEEEauthorblockA{\textit{Guilin University of Electronic Technology} \\
\textit{Guangxi Key Laboratory of Trusted Software}\\
Guilin, China \\
xueminwangbetter@163.com}
\and
\IEEEauthorblockN{2\textsuperscript{nd} Tianlong Gu}
\IEEEauthorblockA{\textit{Jinan University} \\
\textit{Engineering Research Center}\\
\textit{of Trustworthy AI (Ministry of Education)}\\
Guangzhou, China \\
gutianlong@jnu.edu.cn}
\and
\IEEEauthorblockN{3\textsuperscript{rd} Xuguang bao}
\IEEEauthorblockA{\textit{Guilin University of Electronic Technology} \\
\textit{Guangxi Key Laboratory of Trusted Software}\\
Guilin, China  \\
baoxuguang@guet.edu.cn}
\and
\IEEEauthorblockN{4\textsuperscript{th} Liang Chang}
\IEEEauthorblockA{\textit{Guilin University of Electronic Technology} \\
\textit{Guangxi Key Laboratory of Trusted Software
}\\
Guilin, China \\
changl@guet.edu.cn}

}

\maketitle

\begin{abstract}

Graph-structured data is ubiquitous in today’s connected world, driving extensive research in graph analysis. Graph Neural Networks (GNNs) have shown great success in this field, leading to growing interest in developing fair GNNs for critical applications. However, most existing fair GNNs focus on statistical fairness notions, which may be insufficient when dealing with statistical anomalies. Hence, motivated by the causal theory, there has been growing attention to mitigating root causes of unfairness utilizing graph counterfactuals.  Unfortunately, existing methods for generating graph counterfactuals invariably require the sensitive attribute. Nevertheless, in many real-world applications, it is usually infeasible to obtain sensitive attributes due to privacy or legal issues, which challenge existing methods. In this paper, we propose a framework named Fairwos (improving \underline{Fair}ness \underline{w}ith\underline{o}ut \underline{s}ensitive attributes). In particular, we first propose a mechanism to generate pseudo-sensitive attributes to remedy the problem of missing sensitive attributes, and then design a strategy for finding graph counterfactuals from the real dataset. To train fair GNNs, we propose a method to ensure that the embeddings from the original data are consistent with those from the graph counterfactuals, and dynamically adjust the weight of each pseudo-sensitive attribute to balance its contribution to fairness and utility. Furthermore, we theoretically demonstrate that minimizing the relation between these pseudo-sensitive attributes and the prediction can enable the fairness of GNNs. Experimental results on six real-world datasets show that our approach outperforms state-of-the-art methods in balancing utility and fairness.

\end{abstract}

\begin{IEEEkeywords}
Fairness, Counterfactual fairness, Graph neural networks, Fairness without sensitive attributes
\end{IEEEkeywords}

\section{Introduction}

Graph-structured data is prevalent across a wide range of real-world applications, such as database management systems \cite{Mhedhbi2022}, E-commerce \cite{Li2020}, regional ambulance demand forecasting \cite{Wang2021}, traffic prediction \cite{Yuan2021}, drug discovery \cite{Saifuddin2023}, and knowledge graph reasoning \cite{chen2024}. In recent years, graph neural networks (GNNs) have demonstrated significant advantages in numerous graph-based analytical tasks, including node classification \cite{Peng2024}, link prediction \cite{Chen2020}, graph classification \cite{Wang2024}, and graph clustering \cite{Tsitsulin2023}. The reason is that GNNs effectively capture both node attributes and graph topology by aggregating information from neighboring nodes using a message-passing mechanism \cite{ref10}. However, recent studies \cite{ref11, ref12, ref13} show that the predictions of GNNs could be biased towards some demographic groups defined by sensitive attributes, such as race and gender, resulting in sensitive biases \cite{ref14}. Moreover, the message-passing mechanism may further magnify the bias \cite{ref15}, which limits the adoption of GNNs in many high-stake scenarios \cite{ref16, ref17}.

\begin{figure}[t]
    \centering
    \includegraphics[width=\linewidth]{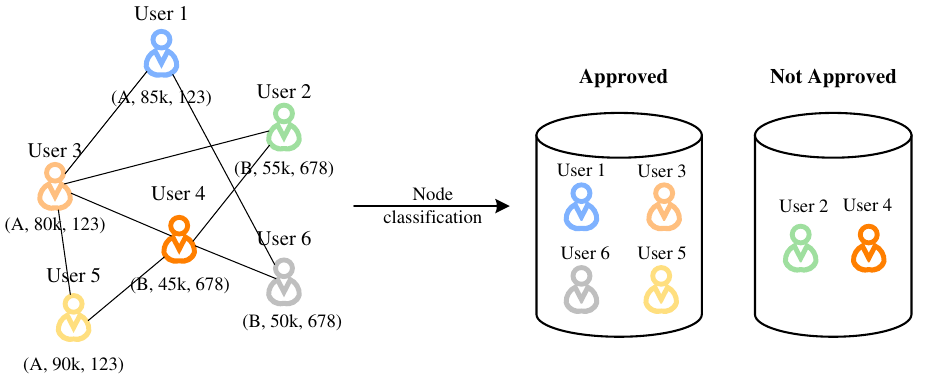}
    \caption{Running example of node classification in a loan approval scenario: users are classified based on their features (race, income, zip code) as well as their relationships with others to predict loan approval.}
    \label{fig:running_example}
\end{figure}

To address the issue of sensitive bias, many efforts have been made to introduce fairness consideration into GNNs, which achieves statistically fair predictions for various subgroups \cite{ref18}.  However, they fail to tackle sensitive bias in the presence of statistical anomalies \cite{ref15}. Hence, recent studies have turned to studying counterfactual fairness. In this context, a graph counterfactual \cite{ref19} is a statement regarding the hypothetical scenario based on the graph structure, generated by modifying certain elements of the original graph, such as node attributes or edge connections. In our running example (see Fig. 1), a graph counterfactual could be created by changing User 1's race from A to B while keeping other attributes constant. To implement counterfactual fairness, methods typically first generate graph counterfactuals by adjusting the sensitive attributes \cite{ref11, ref20}, and then ensure that both the original graph and its graph counterfactual yield similar predictions. However, recent research \cite{ref15} has pointed out that directly perturbing sensitive attributes can lead to non-realistic counterfactuals—the descriptions of hypothetical scenarios that contradict the real world. These counterfactuals often defy logic or common sense and are impossible or highly unlikely to occur in practice. In our running example (see Fig. 1), simply changing User 4's race without adjusting related attributes like income or postal code can create unrealistic counterfactuals, as these attributes are often correlated. Despite the above researches achieving superior performance, they assume the sensitive attributes available during the training process, which is difficult for many real-world applications due to various reasons, such as privacy and legal issues, or difficulties in data collection \cite{ref21, ref22, ref23}. For instance,  the General Data Protection Regulation (GDPR) \cite {ref24} has strict requirements for the process of collecting and using protected data.

Although sensitive attributes are unavailable during training, GNNs may still inherit sensitive biases \cite{ref14}. For example, in our running example (see Fig. 1), racial information may be excluded for privacy reasons, while features correlated with race, such as postal codes, remain in the training data, which introduces sensitive biases. Therefore, we introduce pseudo-sensitive attributes—proxy attributes derived from non-sensitive data within the graph.  These attributes can play a similar role as sensitive attributes and help the model detect and mitigate biases. For example, in our running example (see Fig. 1), postal code can serve as a pseudo-sensitive attribute that depicts the influence of race. To promote fairness using pseudo-sensitive attributes, we need to solve the following questions:

\begin{itemize}

\item \textbf{Pseudo-sensitive attribute generation}. Manually defining pseudo-sensitive attributes \cite{ref23,ref25} becomes challenging when dealing with high-dimensional features.

\item \textbf{Graph counterfactual generation}. Directly changing pseudo-sensitive attributes can lead to non-realistic counterfactuals that ignore the relationships between sensitive attributes and other features.\cite{ref15}

\item \textbf{Mitigating bias}. Existing research \cite{ref26} either improves worst-case subgroup performance while overlooking sensitive biases or focuses on fairness without addressing its root causes through graph counterfactuals \cite{ref27}.

\item \textbf{Balancing fairness and utility}. Different pseudo-sensitive attributes affect fairness and utility in various ways. Using the same regularization for all of them makes it difficult to balance fairness and utility.

\end{itemize}

To sum up, we address the above questions through the following contributions:

\begin{itemize}

\item We propose a framework Fairwos to promote fairness via graph counterfactuals without the sensitive attributes.

\item  We propose a mechanism to generate pseudo-sensitive attributes which are the representations of the graph structure and non-sensitive attributes.

\item We design a graph counterfactual searching strategy, which finds the graph counterfactual from a real data set and avoids the non-realistic counterfactual.

\item We propose a novel method to promote fairness by ensuring the embedding of individuals from original data and graph counterfactuals to be consistent, and design a weight updating mechanism to update the weight for pseudo-sensitive attributes based on their contribution to the utility and unfairness.

\item Extensive experiments on real-world datasets and state-of-the-art baselines show that Fairwos achieves a good balance between fairness and utility.

\end{itemize}

\section{Preliminaries}
	
	\subsection{Notations}
	Let $\mathcal{G}=(\mathcal{V}, \mathcal{E}, \mathbf{X})$ denote an attributed graph, where $\mathcal{V}=\{v_1,$  $\ldots, v_N\}$ is a set of nodes, $\mathcal{E} \subseteq \mathcal{V} \times \mathcal{V}$ is the set of edges, and $\mathbf{X}$ is the node attributes matrix. Here, $\mathbf{x}_{\mathbf{i}}=\left\{x_1\right.$, $\left.x_2, \ldots, x_n\right\}$ denotes the $i$-th row of $\mathbf{X}$, i.e., $\mathbf{x}_{\mathbf{i}}$ is the attribute vectors of the node $v_i$. The set of attributes is denoted as $F=$ $\left\{f_1, f_2, \ldots, f_n\right\}$. It is noted that sensitive attribute $S \notin F$. $\mathrm{A} \in$ $\mathbb{R}^{N \times N}$ denotes the adjacency matrix of graph $\mathcal{G}$, where $\mathbf{A}_{i j}=$ 1 if the edge between the node $v_i$ and node $v_i$, otherwise $\mathbf{A}_{i}=0$. In this paper, we focus on the semi-supervision setting where only a limited number of nodes $\mathcal{V}_L=$ $\left\{v_1, \ldots, v_l\right\} \subseteq \mathcal{V}$ are labeled for training and other nodes $\mathcal{V}_U=$ $\mathcal{V} \backslash \mathcal{V}_L$ are unlabeled.

\begin{figure*}[t]
		\centering
		\captionsetup{justification=centering}
		\includegraphics[width=\textwidth]{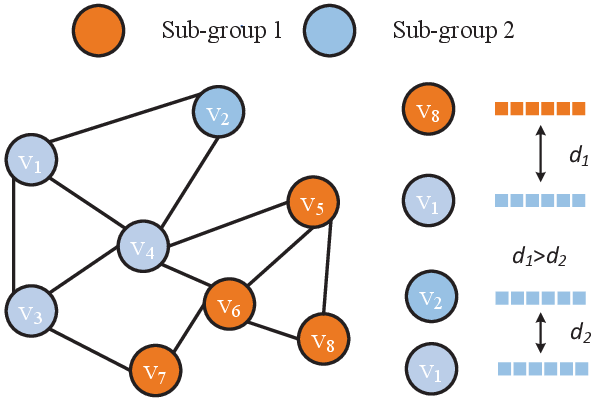} 
		\caption{An illustration of our proposed framework}
		\label{fig:example1}
	\end{figure*}

\subsection{Fairness Notations}

To verify the model's fairness, sensitive attributes can be requested during the testing phase for evaluating subgroup performance \cite{ref26,ref27}. To better understand and quantify this fairness, we introduce two formal fairness definitions applicable to binary labels $y \in \{0,1\}$ and a sensitive attribute $s \in \{0,1\}$. The classifier's prediction is denoted as $\hat{y} \in \{0,1\}$, where the classifier $\eta$ maps attribute vectors $\mathbf{x}$ to labels $y$.

\textbf{Definition 3.1. (Statistical Parity \cite{ref28}).} Statistical parity requires that the predictions be independent of the sensitive attribute $s$. Formally, this can be expressed as:
\begin{equation}
P(\hat{y} \mid s=0)=P(\hat{y} \mid s=1).
\end{equation}

\textbf{Definition 3.2. (Equal Opportunity \cite{ref29}).} Equal opportunity requires that the probability of a positive outcome for instances in the positive class should be the same for both subgroups. Formally, this is defined as:
\begin{equation}
P(\hat{y}=1 \mid y=1, s=0)=P(\hat{y}=1 \mid y=1, s=1).
\end{equation}

Statistical parity ensures the classifier maintains equal prediction across different subgroups, while equal opportunity expects the classifier to maintain equal true positive rates across different subgroups. We employ the following metrics to quantitatively assess statistical parity and equal opportunity:
\begin{equation}
\begin{gathered}
\Delta_{SP}=|P(\hat{y}=1 \mid s=0)-P(\hat{y}=1 \mid s=1)|, \\
\Delta_{EO}=|P(\hat{y}=1 \mid y=1, s=0)-P(\hat{y}=1 \mid y=1, s=1)|,
\end{gathered}
\end{equation}
where these probabilities are calculated on the test set. Based on these fairness notations, we define our problem as follows:

\subsection{Problem Definition}

  \textbf{Problem 1.} Given a graph $\mathcal{G}=(\mathcal{V}, \mathcal{E}, \mathbf{X})$ with a small label node set $\mathcal{V}_L \subseteq \mathcal{V}$, the attributes are denoted as $F=$ $\left\{f_1, f_2, \ldots, f_n\right\}$ and the sensitive attributes are not included in the attributes, i.e., $S \notin F$, our problem is to learn a fair GNN with high utility whilst guaranteeing the fairness criteria described above.

	\section{Methodology}

        \subsection{Overview}

	In this section, we propose a novel framework Fairwos, which aims to learn representation for promoting fairness without sensitive attributes.  As the illustration shown in Fig. 2, Fairwos mainly consists of five key components: (i) the encoder module reduces the dimension of input attributes; (ii) the GNN classifier leverages the processed attributes to learn the representation; (iii) the counterfactual data augmentation module finds graph counterfactuals for each factual observation with the guidance of the pseudo-sensitive attributes and labels; (iv) fair representation learning module encodes the graph counterfactual and the original data to gain the representations and minimize the disparity between these representations; (v) weight updating module adjusts the weight for each pseudo-sensitive attribute. The details of each component are introduced as follows:

 \subsection{Encoder Module}
        In previous studies, researchers have selected related non-sensitive attributes as pseudo-sensitive attributes based on prior knowledge \cite{ref23}. However, when dealing with high-dimensional attributes, it becomes challenging to determine these pseudo-sensitive attributes. At the same time, selecting appropriate pseudo-sensitive attributes is crucial for promoting fairness. To address this, we present the causal relationship between sensitive attributes and prediction in Fig. 3. Although sensitive attributes are excluded from the training process, they still influence the graph structure and non-sensitive attributes, thereby indirectly affecting the prediction. To capture this influence, we generate pseudo-sensitive attributes by incorporating representations of both the graph structure and non-sensitive attributes. As shown in Fig. 3, reducing the influence of each pseudo-sensitive attribute on the prediction helps minimize the impact of sensitive attributes. Therefore, we adopt an encoder to generate pseudo-sensitive attributes, as it can effectively learn complex relationships within high-dimensional data and capture the influence of both graph structure and non-sensitive attributes. The encoder is formulated as follows:
        
        \begin{equation}
		\widehat{\mathbf{Y}}=\operatorname{softmax}\left(\operatorname{Encoder}\left(\mathcal{G} ; \boldsymbol{\Theta}_{\text {enc }}\right) \cdot \mathbf{W}\right)
	\end{equation}
        where Encoder refers to a representation learning method with parameter set $\boldsymbol{\Theta}_{\mathrm{enc}}$, $\mathbf{W}$ is the weight matrix, $\mathcal{G}$ is the original graph data without sensitive attributes, and $\widehat{\mathbf{Y}}$ is the corresponding posterior class probabilities. To train this model, we minimize the cross-entropy (or any other classification-related) loss function $\ell(\cdot, \cdot)$ for the model parameters $\boldsymbol{\Theta}=\left\{\boldsymbol{\Theta}_{\text {enc }}, \mathbf{W}\right\}$ :
	\begin{equation}
		\boldsymbol{\Theta}^{\star}=\arg \min _{\boldsymbol{\Theta}} \sum_{v \in \mathcal{V}_L} \ell\left(\widehat{\mathbf{Y}}_v, \mathbf{Y}_v\right),
	\end{equation}
	where $\mathbf{Y}$ is the ground-truth labels, and $\mathcal{V}_L \subseteq \mathcal{V}$ is the set of training nodes. After pre-training, the encoder is used as a feature extractor for downstream tasks, providing low-dimensional node attributes, $\mathbf{X}^{(0)}$:
	\begin{equation}
		\mathbf{X}^{(0)}=\operatorname{Encoder}\left(\mathcal{G} ; \boldsymbol{\Theta}_{\text {enc }}^{\star}\right) .
	\end{equation}
    where $\boldsymbol{\Theta}_{\text {enc }}^{\star}$ refers to the optimal encoder parameter set. 
        
     \begin{figure}[t]
		\centering
		\includegraphics[width=0.5\textwidth]{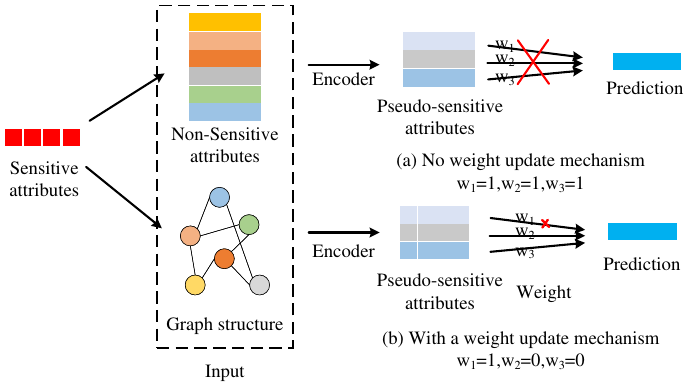} 
		\caption{ A causal relationship exists between sensitive attributes and predictions, where pseudo-sensitive attributes are low-dimensional representations of non-sensitive attributes and graph structure. (a) Applying regularization to each pseudo-sensitive attribute ensures that sensitive attributes do not influence predictions; (b) To balance the impact of pseudo-sensitive attributes on utility and fairness, a weight-update mechanism is used to adjust the strength of fairness enhancement.}
		\label{fig:example2}
	\end{figure}

     \subsection{GNN Classifier}

        The GNN classifier inputs $\mathcal{G}=(\mathcal{V}, \mathcal{E}, \mathbf{X}^{(0)})$ and outputs the prediction. Since our proposed Fairwos is flexible for various backbones such as GCN \cite{ref30} and GIN \cite{ref31}. Generally, the GNN classifier learns the representation after $l$-th layers of iterations. Specifically, each layer aggregates the representation of the node and the representation of the neighbors. The basic operation of GNN between $l$-th and $l$+1-th layer can be formulated as follows:

	\begin{equation}
		\mathbf{a}_v^{(l+1)}=\operatorname{AGGREGATE}^{(l)}\left(\left\{\mathbf{h}_u^{(l)}: u \in \mathcal{N}(v)\right\}\right)
	\end{equation}
	where $\mathcal{N}(v)$ denotes the node $v$'s neighborhoods.

	\begin{equation}
		\mathbf{h}_v^{(l+1)}=\operatorname{COMBINE}^{(l+1)}\left(\mathbf{h}_v^{(l)}, \mathbf{a}_v^{(l+1)}\right)
	\end{equation}
	where $\mathbf{h}_v^{(l+1)}$ is the representation vector of the node $v_i$ at the  $l$+1-th layer, and $\mathbf{a}_v^{(l+1)}$ denotes the representation of node $v_i$. In this paper, we consider the node classification task. The node representation is fed as input to a linear classification layer, formulated as follows:

	\begin{equation}
		\hat{y}_v=\sigma\left(\mathbf{h}_v \cdot \mathbf{w}\right)
	\end{equation}
	where $\mathbf{w}$ denotes the parameter of the linear classification layer. To enforce the prediction $\hat{y}_v$ to be closer to the ground truth ${y_v}$, a loss function is defined as follows:

	\begin{equation}
		\min _\theta \mathcal{L}_U=-\frac{1}{\left|v_L\right|} \sum_{v \in v_L}\left[y_v \log \hat{y}_v+\left(1-y_v\right) \log \left(1-\hat{y}_v\right)\right]
	\end{equation}
	where $\theta$ denotes the parameter of the GNN classifier.

    \subsection{Counterfactual Data Augmentation}

     We obtain the pseudo-sensitive attributes $\mathbf{X^{(0)}}$ from the encoder module. Next, we aim to perturb the pseudo-sensitive attributes to construct the graph counterfactual. However, the generated graph counterfactuals for several pseudo-sensitive attributes may introduce too many non-realistic counterfactuals. Enforcing the original representation to be consistent with representations of these non-realistic counterfactuals may disrupt the underlying latent semantic structure and decrease the model's utility. To address this issue, we generate the graph counterfactual from the actual data by searching for some nodes with the same label and different $x_{i}^{0} \in \mathbf{X^{(0)}}$. For the subgraph $\mathcal{G}_i$, the task of finding its graph counterfactual is formulated as follows:
	
	\begin{equation}
		\mathcal{G}_j=\underset{\mathcal{G}_j \in \mathbb{G}}{\arg \min }\left\{\mathcal{D}\left(\mathcal{G}_i, \mathcal{G}_j\right) \mid  y_i=  y_j, x_{i}^{0}\neq x_{j}^{0}\right\}
	\end{equation}
	where $\left.\mathbb{G}=\left\{\mathcal{G}_i \mid v_i \in \mathcal{V}\right)\right\}$ and $D(\cdot, \cdot)$ is a metric to measure the similarity between the subgraph $\mathcal{G}_i$ and subgraph $\mathcal{G}_j$, such as L2 distance. However, it is difficult to calculate the similarity between the subgraphs, while it is easy to calculate the similarity by leveraging the subgraph representation $\mathbf{h}_{\mathbf{i}}$. The task of finding the graph counterfactual is formulated as follows:
	
	\begin{equation}
		\mathbf{h}_{\mathbf{j}}=\underset{h_j \in \mathbb{H}}{\arg \min }\left\{\mathcal{D}\left(\mathbf{h}_{\mathbf{i}}, \mathbf{h}_{\mathbf{j}}\right)| y_i=  y_j, x_{i}^{0}\neq x_{j}^{0}\right\}
	\end{equation}
	where $\left.\mathbb{H}=\left\{\mathbf{h}_{\mathbf{i}} \mid v_i \in \mathcal{V}\right)\right\}$, and $\mathcal{D}(\cdot, \cdot)$ is a distance metric such as L2 distance. Since our setting is semi-supervised and the labels are limited, we pre-train a GNN classifier to provide pseudo-labels for searching the graph counterfactuals. For each node, only searching for one graph counterfactual for each node may result in mitigating bias inefficiently. Meanwhile, we also consider preserving high utility. Hence, we choose the top-$K$ graph counterfactuals which are most similar to the original graph data with the same prediction and different $x^{0}$. The counterfactual data augmentation returns the graph counterfactuals for the node $v_i$ denoted as $\left\{\mathcal{G}_i^k \mid k=1,2, \ldots, K\right\}$.

    \subsection{Fair Representation Learning}

    Following the design in \cite{ref11}, the causal relationship between the pseudo-sensitive attributes and the prediction can be achieved by minimizing the distance between the representations derived from the graph counterfactual and the original data. For the node $v$ with attribute $x_i^{0}$, the original subgraph is $\mathcal{G}_i$. Here, the counterfactual argumentation module returns the graph counterfactuals $\left\{\mathcal{G}_i^k \mid k=1,2, \ldots, K\right\}$. The GNN encoder is denoted as $f_{\mathcal{G}}$, the subgraph $\mathcal{G}_{i}$ inputs into the $f_{\mathcal{G}}$ and the representation is returned as $\mathbf{h_i}=f_{\mathcal{G}}\left(\mathcal{G}_i\right)$. Besides, the representation for graph counterfactuals are denoted as $\overline{\mathbf{h}}_{i}^{\mathbf{k}}=f_{\mathcal{G}}\left(\mathcal{G}_i^k\right)$ . To reduce the causal relationship, we minimize the disparity between the representations learned from $\mathbf{h}_i$ and $\overline{\mathbf{h}}_i^{\mathbf{k}}$. The loss function is formulated as follows:
	
	\begin{equation}
		\min _\theta \mathcal{D}=\sum_{k=1}^K \mathcal{D}_i \left(\mathbf{h}_i, \overline{\mathbf{h}}_{i}^{\mathbf{k}}\right)
	\end{equation}
	where $\mathcal{D} (\cdot, \cdot)$ denotes the distance metric such as L2 distance, the $K$ refers to the top-$K$ graph counterfactuals. The pseudo-sensitive attributes are not unique. Besides, these attributes contribute to bias variously and they affect the prediction differently. It is necessary to consider the importance of the pseudo-sensitive attributes. For these attributes, $x_{i}^{0} \in$ $\mathbf{X}^{0}$, where $1 \leq i \leq I$, we align the weight for each regularization term, the loss function is formulated as follows:
	
	\begin{equation}
		\min _\theta \mathcal{D}_{\text {all }}=\sum_{i=1}^I \lambda_i \cdot \mathcal{D}_i
	\end{equation}
	where $\lambda_i$ denotes the importance weight for the attribute $x_i \in \mathbf{x}^{0}$. A larger $\mathcal{D}_i$ indicates that the attribute $x_i$ has a strong causal relationship with the prediction. Hence, we would prefer to align a large $\lambda_i$ to more effectively reduce the causal relationship. Otherwise, the small $\lambda_i$ is preferred. The large $\lambda_i$ results in a weak relationship between the attribute $x^0_i$ and the prediction $\hat{y}_v$. Besides, it leads to little contribution of $x^0_i$ to the prediction. Since the learning process is dynamic, the predefined $\lambda_i$ leads to a failure to achieve a good trade-off between fairness and utility.  Hence, the $\lambda_i$ needs to be updated based on the regularization loss. The full optimization objective function is formulated as follows:
	
	\begin{equation}
		\begin{gathered}
			\min _{\theta, \lambda} \mathcal{L}_{U}+\alpha \cdot \sum_{i=1}^I \lambda_i \cdot \sum_{k=1}^K \mathcal{D}_i \left(\mathbf{h}_{\mathbf{i}}, \overline{\mathbf{h}}_{i}^{\mathbf{k}}\right)+\|\lambda\|_2^2 \\
			\text { s.t. } \lambda_i \geq 0, \forall x^{0}_i \in \mathbf{x}^{0}  ; \sum_{i=1}^I \lambda_i=1
		\end{gathered}
	\end{equation}
	where $\alpha$ denotes the weights of the regularization term.

	\subsection{Training Algorithm}
	
	It is a challenge to optimize the parameters of the GNN classifier and $\lambda$. Thus, we adopt an alternating optimization schema to update these parameters iteratively in this work. 
	
	\textbf{Update $\boldsymbol{\theta}$.} To update $\theta$, we fix $\lambda$ and remove other parameters that are irrelevant to $\theta$. The objective function in equation (15) reduces to equation (16), which is formulated as follows:
	
	\begin{equation}
		\min _\theta \mathcal{L}_{U}+\alpha \cdot \sum_{i=1}^I \lambda_i \cdot \sum_{k=1}^K \mathcal{D}_i\left(\mathbf{h}_{\mathbf{i}}, \overline{\mathbf{h}}_{i}^{\mathbf{k}}\right)+\|\lambda\|_2^2
	\end{equation}
	where $\theta$ is updated via stochastic gradient descent.
	
	\textbf{Update $\lambda_i$.} To update $\lambda_i$, the objective function in equation (15) is reduced to equation (17) as follows:
	
	\begin{equation}
		\begin{gathered}
			\min _\lambda \alpha \cdot \sum_{i=1}^I \lambda_i \cdot \sum_{k=1}^K \mathcal{D}_i \left(\mathbf{h}_{\mathbf{i}}, \overline{\mathbf{h}}_{i}^{\mathbf{k}}\right)+\|\lambda\|_2^2 \\
			\text { s.t. } \lambda_i \geq 0, \forall  x^{0}_i \in \mathbf{x}^{0} ; \sum_{i=1}^I \lambda_i=1
		\end{gathered}
	\end{equation}
	
	The equation (17) can be transformed as follows:
	
	\begin{equation}
		\begin{gathered}
			\min _\lambda \alpha \cdot \sum_{i=1}^I \lambda_i \cdot \sum_{k=1}^K \mathcal{D}_i\left(\mathbf{h}_{\mathbf{i}}, \overline{\mathbf{h}}_{i}^{\mathbf{k}}\right)+\|\lambda\|_2^2 \\
			\text { s.t. } -\lambda_i \leq 0, \forall  x^{0}_i \in \mathbf{x}^{0} ; \sum_{i=1}^I \lambda_i-1=0
		\end{gathered}
	\end{equation}
	
    Based on equation (18), we construct the Lagrange function as follows:

	\begin{equation}
		L(\lambda, a, b)=\alpha \cdot \sum_{i=1}^I \lambda_i \cdot \mathcal{D}_i^K +\|\lambda\|_2^2 - a\cdot\lambda+b \cdot \sum_{i=1}^I (\lambda_i-1)
	\end{equation}
	where $\sum_{k=1}^K \mathcal{D}_i\left(\mathbf{h}_{\mathbf{i}}, \overline{\mathbf{h}}_{i}^{\mathbf{k}}\right)$ is represented as $\mathcal{D}_i^K$. Here $a$ and $b$ are Lagrange multipliers. The KKT condition is calculated as follows:

	\begin{equation}
		\left\{\begin{array}{l}
			\mathcal{D}_i^K+ 2\lambda_i-a_{i}+b=0, \forall i ; \\
			a_i \cdot \lambda_i=0, \forall i ; \\
			\lambda_i \geq 0 \quad \forall i ; \quad \sum_{i=1}^I \lambda_i=1 ; \\
			a_i \geq 0 \quad \forall i .
		\end{array}\right.
	\end{equation}
	We can obtain the $\lambda_i$ as follows:
	
	\begin{equation}
		\lambda_i=\frac{a_i-b- \mathcal{D}_i^K}{2 }, \quad i=1, \ldots, K
	\end{equation}

	For the second condition $a_i \cdot \lambda_i=0, \forall i$, we can know that $\lambda_i=\max \left\{0, \frac{-b-\mathcal{D}_i^K}{2 }\right\}$. Hence, it is necessary to calculate $b$ first. Since $\sum_{i=1}^I \lambda_i=1$, we can obtain the following equation: 
	
	\begin{equation}
		\sum_{i=1}^I \max \left\{0,-b-\mathcal{D}_i^K\right\}=2
	\end{equation}

	To simplify the representation, we denote $\mathcal{D}_i^K$ as $\mathcal{D}_i$. To solve the equation (22), we first $\operatorname{rank} \mathcal{D}_i$ in descending order and the ranking list is formulated as $\{\mathcal{D}_{1}, \mathcal{D}_{2},..., \mathcal{D}_{I}\}$. Assume that $b$ is within $\left[-\mathcal{D}_{j-1}^{\prime},-\mathcal{D}_j^{\prime}\right]$, the above equation is reduced to

	\begin{equation}
		b=-\frac{2+\sum_{i=j}^I \mathcal{D}_i^{\prime}}{I-j+1}
	\end{equation}

	Hence, $\lambda_i$ can be calculated as follows:
	
	\begin{equation}
		\lambda_i=\max \left\{0, \frac{2+\sum_{i=j}^I \mathcal{D}_{i}^{{\prime}} -\mathcal{D}_{i} \cdot (I-j+1)} {2 \cdot (I-j+1) }\right\}
	\end{equation}

	Based on this updating method, we formulate the training algorithm of Fairwos in algorithm 1. Specifically, it first initializes the encoder in line 1 and $\lambda$ in line 2. Then it extracts the low-dimensional node attributes and denotes them as the pseudo-sensitive. The pseudo-sensitive attributes are leveraged to pre-train GNN for the node classification task from line 3 to line 4. Next, the model is fine-tuned to improve the fairness from line 5 to line 8. Specifically, it finds the graph counterfactual in the training set and obtains the representations from the graph counterfactual and the original graph from line 6 to line 7. The parameter $\theta$ is updated by minimizing the disparity between these representations. At this time, the $\lambda$ is fixed; in the next step, we learn the parameter $\lambda_i$ based on the updated $\theta$ using the KKT in line 9 to line 12. Finally, this algorithm returns the parameter of the GNN classifier.

	\begin{algorithm}[tb]
		\caption{Training Algorithm of Fairwos}
		\label{alg:algorithm}
		\begin{algorithmic}[1]
			\REQUIRE $\mathcal{G}=(\mathcal{V}, \mathcal{E}, \mathbf{X}), \mathbf{y} \in \mathbb{R}^C, \alpha$
			\ENSURE $\theta$
			\STATE Initialize the encoder by optimizing (2).
			\STATE Initialize $\lambda$ as $1 / n$ where $n$ denotes encoder's dimension.
			\STATE Extract the low-dimensional node attributes $\mathbf{X}^{(0)}$
			\STATE Initialize $\theta$ by optimizing Eq. (7)
			\REPEAT
			\STATE Obtain the graph counterfactual $\left\{\mathcal{G}_i^k \mid k=1,2, \ldots, K\right\}$ for each node $v_i$.
			\STATE Gain the representation for the graph counterfactuals and representation for the original graph.
			\STATE Update $\theta$ based on Eq. (13)
			\STATE Rank $\mathcal{D}_j$ in descending order $\{\mathcal{D}_{1}, \mathcal{D}_{2},..., \mathcal{D}_{K}\}$
			\IF {$-\frac{2+\sum_{i=j}^I \mathcal{D}_i^{\prime}}{I-j+1} \in \left[-\mathcal{D}_{j-1}^{\prime},-\mathcal{D}_j^{\prime}\right]$}
			\STATE Calculate $\lambda_i$ based on Eq. (24).
			\ENDIF
			\UNTIL convergence
			\STATE \textbf{return} $\theta$
		\end{algorithmic}
	\end{algorithm}

	\section{Theoretical Analysis of Fairwos}
        \subsection{Fairness Theoretical Analysis}
	In this subsection, we provide a detailed theoretical analysis of our framework, Fairwos. Here, we provide, more specifically, a theoretical upper bound on the unfairness of the resulting representations.  Lastly, we provide a reason why the fairness promotion can reduce the upper bound.
	
	\begin{theorem}
		(The upper bound of unfairness). Given the representation $\mathbf{\mathcal{G}_u}$, $\mathbf{z_u}$, $\mathbf{x^{0}_u}$ and $\mathbf{\hat{y}_{u}}$, where $\mathbf{\mathcal{G}_u}$ denotes the sub-graph of node $u$, $\mathbf{z_u}$ refers to the representation learned by GNN, $\mathbf{x^{0}_u}$ denotes the peseudo-sensitive attributes and $\mathbf{\hat{y}_{u}}$ is the prediction by the $\mathbf{z_u}$, the mutal information between $s$ and $z_{u}$ is formualted as follows: $0 \leq I\left(s ; \mathbf{\hat{y}_{u}}\right) \leq I\left(s ; \mathbf{z_u}\right) \leq I\left(\mathcal{G}_u ; \mathbf{z}_u\right) \leq \sum_t^T I\left(x^{0}_i ; \mathbf{z}_u\right)$. Here, $T$ refers to the pseudo-sensitive attributes that are highly related to the representation $z_u$.
	\end{theorem}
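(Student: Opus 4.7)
The plan is to split the chain into four links and handle each one with a separate information-theoretic argument, treating the first three as direct applications of non-negativity and the data processing inequality (DPI), and reserving the real work for the last (decomposition) bound.

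First, $0 \leq I(s;\hat{\mathbf{y}}_u)$ is the standard non-negativity of mutual information and needs no further argument. Second, for $I(s;\hat{\mathbf{y}}_u) \leq I(s;\mathbf{z}_u)$, I would observe from equation (9) that $\hat{\mathbf{y}}_u = \sigma(\mathbf{z}_u\cdot\mathbf{w})$ is a deterministic function of $\mathbf{z}_u$, giving the Markov chain $s \to \mathbf{z}_u \to \hat{\mathbf{y}}_u$ and so DPI yields the bound. Third, for $I(s;\mathbf{z}_u)\leq I(\mathcal{G}_u;\mathbf{z}_u)$, I would invoke the causal picture described in Fig.~3 and Section~III-B: since $s$ is excluded from the input, it influences $\mathbf{z}_u$ only through its effect on the non-sensitive attributes and the graph topology, i.e., through $\mathcal{G}_u$. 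This gives the Markov chain $s \to \mathcal{G}_u \to \mathbf{z}_u$, and DPI again provides the inequality.

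The non-trivial link is the last one, $I(\mathcal{G}_u;\mathbf{z}_u) \leq \sum_{i=1}^{T} I(x^0_i;\mathbf{z}_u)$. My plan is a two-step reduction. Step (a): use equations (4)--(7) to argue that after pretraining the encoder acts as a feature extractor producing $\mathbf{x}^{(0)}_u = (x^0_1,\ldots,x^0_n)$ as a low-dimensional summary of $\mathcal{G}_u$, and the downstream GNN classifier sees $\mathcal{G}_u$ only through this code. That establishes the Markov chain $\mathcal{G}_u \to \mathbf{x}^{(0)}_u \to \mathbf{z}_u$, so DPI gives $I(\mathcal{G}_u;\mathbf{z}_u) \leq I(\mathbf{x}^{(0)}_u;\mathbf{z}_u)$. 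Step (b): bound the joint mutual information by the sum of marginals. I would first write the chain-rule identity $I(x^0_1,\ldots,x^0_T;\mathbf{z}_u) = \sum_{t=1}^{T} I(x^0_t;\mathbf{z}_u \mid x^0_{<t})$, then appeal to the subadditivity of mutual information under an (approximate) independence assumption on the components of $\mathbf{x}^{(0)}_u$, which is consistent with the encoder's goal of producing disentangled coordinates capturing different aspects of the graph. Finally, I would justify restricting the sum to the $T$ components "highly related" to $\mathbf{z}_u$ by noting that weakly related coordinates contribute negligibly.

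The main obstacle is precisely this last subadditivity step: without an independence (or at least conditional-independence) hypothesis on the pseudo-sensitive coordinates, one only has $I(\mathbf{x}^{(0)}_u;\mathbf{z}_u) = \sum_t I(x^0_t;\mathbf{z}_u \mid x^0_{<t})$ rather than $\sum_t I(x^0_t;\mathbf{z}_u)$, and the two can differ when the coordinates share information about $\mathbf{z}_u$. I would therefore either state the independence of the encoder outputs as an assumption (motivated by the encoder's dimensionality-reducing role) or weaken the conclusion to a bound in terms of conditional mutual informations; either way, this is the step that requires the most careful justification, while everything else is a straightforward DPI argument. A short corollary would then note that the loss terms in equation (13) push each $I(x^0_i;\mathbf{z}_u)$ toward zero, thereby squeezing the whole chain and in particular the unfairness proxy $I(s;\hat{\mathbf{y}}_u)$ down with it.
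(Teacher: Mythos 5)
Your first three links coincide with the paper's own argument, which is nothing more than the data processing inequality applied along the Markov chain $s \rightarrow \mathcal{G}_u \rightarrow \mathbf{x}^{0}_u \rightarrow \mathbf{z}_u \rightarrow \hat{\mathbf{y}}_u$; in fact you are more careful than the paper, which merely asserts the last two inequalities with verbal interpretations and never derives the final decomposition at all. You are also right that the only non-trivial link is $I(\mathbf{x}^{(0)}_u;\mathbf{z}_u)\leq\sum_t I(x^0_t;\mathbf{z}_u)$.

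However, your primary repair of that link points in the wrong direction. If the coordinates $x^0_1,\dots,x^0_T$ are \emph{unconditionally} independent, mutual information is superadditive, not subadditive: $I(X_1,X_2;Z)\geq I(X_1;Z)+I(X_2;Z)$, and the XOR example $Z=X_1\oplus X_2$ with i.i.d.\ fair bits shows the gap can be maximal ($0+0$ on the right versus $1$ bit on the left). So assuming the encoder produces independent, disentangled coordinates would defeat the bound rather than establish it. The hypothesis you actually need is the one you mention only as a fallback: conditional independence of the $x^0_t$ given $\mathbf{z}_u$, under which $H(\mathbf{x}^{(0)}_u\mid\mathbf{z}_u)=\sum_t H(x^0_t\mid\mathbf{z}_u)$ and subadditivity of joint entropy give $I(\mathbf{x}^{(0)}_u;\mathbf{z}_u)\leq\sum_t I(x^0_t;\mathbf{z}_u)$; that should be stated as an explicit assumption. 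A second, smaller issue: truncating the sum to the $T$ ``highly related'' coordinates removes non-negative terms from the right-hand side and therefore shrinks the claimed upper bound, so ``contribute negligibly'' is not enough --- the dropped terms must be exactly zero, or the bound must be stated over all coordinates. The paper's proof silently skips both points, so your write-up, once the independence assumption is corrected to the conditional form, would be strictly more rigorous than the original.
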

	
	\begin{proof}
		The downstream classifier uses our framework's representation $z_u$ to predict the label $y_u$ of node $u$, thus forming a Markov chain $s \rightarrow \mathcal{G} \rightarrow \mathbf{z_u} \rightarrow \mathbf{\hat{y}_{u}}$. Essentially, promoting fairness is to decrease the relation between the sensitive attribute and the representation $\mathbf{z}_u$, i.e., the mutual information between $s$ and $\mathbf{z}_u$. Due to the existence of an encoder, the Markov chain is represented as $s \rightarrow \mathcal{G}_u \rightarrow \mathbf{x^{0}_u} \rightarrow \mathbf{z_u} \rightarrow \mathbf{\hat{y}_{u}}$. 
  We show that the downstream classifiers that leverage the representations output by Fairwos satisfy fairness as well.  $I\left(s; \mathbf{z_u}\right)  \leq I\left(\mathcal{G}_u; \mathbf{z}_u\right)$ represent that the bias is caused by the non-sensitive information (i.e., sub-graph) when the sensitive attributes are absent. Furthermore $ I\left(\mathcal{G}_u; \mathbf{z}_u\right) \leq \sum_t^T I\left(x^{0}_i; \mathbf{z_u}\right)$ indicate that the bias is determined by $\mathbf{x^{0}_u}$
	\end{proof}
	
	We reduce the mutual information between $x^{0}_i$ and $z_u$, which reduces the relation between $s$ and $\hat{y}_{u}$. The mutual information is reduced by enforcing the embedding of original data to be consistent with the embedding of the searched graph counterfactual.  
	
	\begin{theorem}
		(The upper bound of the difference between the original data and the graph counterfactual) Given the representation $z_u$ and the graph counterfactual representation $\tilde{z}_u$, the distances between them for $T$ pseudo-sensitive attributes are formulated as follows:
	\end{theorem}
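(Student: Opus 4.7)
The plan is to bound $\mathcal{D}(\mathbf{z}_u, \tilde{\mathbf{z}}_u)$ by a sum of contributions from the $T$ pseudo-sensitive attributes, paralleling the mutual-information decomposition in Theorem~1. First, I would construct an interpolating chain of representations $\mathbf{z}_u = \mathbf{z}_u^{(0)}, \mathbf{z}_u^{(1)}, \ldots, \mathbf{z}_u^{(T)} = \tilde{\mathbf{z}}_u$ in which, at step $t$, only the $t$-th pseudo-sensitive coordinate of $\mathbf{x}_u^{0}$ has been flipped from its original value to its counterfactual value while all other coordinates remain fixed. The triangle inequality then yields
\begin{equation*}
\mathcal{D}(\mathbf{z}_u, \tilde{\mathbf{z}}_u) \leq \sum_{t=1}^{T} \mathcal{D}\!\left(\mathbf{z}_u^{(t-1)}, \mathbf{z}_u^{(t)}\right),
\end{equation*}
which is the attribute-by-attribute decomposition the theorem asserts.

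Next, I would bound each single-coordinate term. Because $\mathbf{z}_u^{(t-1)}$ and $\mathbf{z}_u^{(t)}$ differ only in the $t$-th entry of $\mathbf{x}_u^{0}$, and because the encoder $f_\mathcal{G}$ built from the AGGREGATE and COMBINE updates in equations~(7)--(8) is Lipschitz continuous in its input, each term admits a bound of the form $L_t \cdot |x_{u,t}^{0} - \tilde{x}_{u,t}^{0}|$, where $L_t$ depends on the depth $l$ and the operator norms of the layer weights. The search constraint in equation~(12) -- same pseudo-label and minimal subgraph distance -- guarantees that the selected counterfactuals match the original in the remaining coordinates as closely as the dataset allows, so the per-attribute bound captures essentially all of the information the encoder is permitted to read from $x_{u,t}^{0}$. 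Taking expectations over the top-$K$ counterfactuals returned by the augmentation module and invoking Pinsker's (or Bretagnolle--Huber's) inequality then links the right-hand side back to $I(x_i^{0};\mathbf{z}_u)$ from Theorem~1, so that the regulariser in equation~(15) visibly shrinks both bounds simultaneously.

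The main obstacle I anticipate is the discreteness of the counterfactual search in equations~(11)--(12): the interpolants $\mathbf{z}_u^{(t)}$ need not correspond to any real subgraph in $\mathbb{G}$, so the chain argument inevitably lives in latent space rather than in data space. Making this rigorous requires either (i) an explicit smoothness hypothesis on $f_\mathcal{G}$, so that intermediate representations can be constructed directly in the embedding space, or (ii) replacing the single-attribute chain by a convex combination over the top-$K$ counterfactuals returned by the module and closing via Jensen's inequality. Choosing the cleaner of these two routes, and tracking the Lipschitz constants through the $l$ message-passing layers so that the final bound remains interpretable and visibly controlled by the loss in equation~(13), will be the technical heart of the argument.
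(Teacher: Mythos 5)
Your route differs from the paper's in a structural way that stems from how you read the left-hand side. The paper's bound is $\sum_{t}\lVert\tilde{\mathbf{z}}_u^t-\mathbf{z}_u^t\rVert_p \le \sum_{t}\prod_{k=1}^K\lVert\mathbf{W}_a^k\rVert_p$, where the index $t$ ranges over $T$ \emph{separate} counterfactuals, each obtained by changing a single pseudo-sensitive attribute; the left side is already a sum of $T$ independent distances, so no telescoping chain or triangle inequality is needed. The paper simply bounds each term by an explicit layer-by-layer recursion: write the update $\mathbf{z}_u^k=\sigma(\mathbf{W}_a^k\mathbf{z}_u^{k-1}+\mathbf{W}_n^k\sum_{v\in\mathcal{N}(u)}\mathbf{z}_v^{k-1})$, use the $1$-Lipschitz property of $\sigma$, argue that the neighbor-aggregation difference vanishes (identical neighborhoods), apply submultiplicativity to peel off $\lVert\mathbf{W}_a^k\rVert_p$ at each layer, and close with the claim $\lVert\tilde{\mathbf{x}}_u^0-\mathbf{x}_u^0\rVert=1$ since only one attribute is perturbed. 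Your interpolating chain $\mathbf{z}_u^{(0)},\ldots,\mathbf{z}_u^{(T)}$ is instead designed to decompose the distance to a \emph{single} counterfactual in which all $T$ attributes differ --- a statement the paper does not make --- and it buys you generality at the cost of interpolants that, as you yourself note, need not correspond to any real subgraph. Your Lipschitz intuition ("$L_t$ depends on the depth and the operator norms of the layer weights") is exactly the right ingredient, but you never execute the recursion that turns it into the concrete constant $\prod_k\lVert\mathbf{W}_a^k\rVert_p$.

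Two further points. First, the Pinsker/Bretagnolle--Huber step linking the distance bound back to $I(x_i^0;\mathbf{z}_u)$ is extraneous to this theorem and unjustified as stated: there is no probabilistic structure (no distributions over representations) set up that would let a total-variation-to-KL inequality apply, and the paper keeps Theorems~1 and~2 as separate statements rather than deriving one from the other. Second, your worry about the discreteness of the counterfactual search is well placed, but note that it cuts against the paper's own proof at least as much as against yours: the searched counterfactual of equation~(12) is only required to have the same label and a different value of $x_i^0$, so neither $\lVert\tilde{\mathbf{x}}_u^0-\mathbf{x}_u^0\rVert=1$ nor the identical-neighborhood assumption (which the paper uses to kill the $\mathbf{W}_n^k$ term) is actually guaranteed by the construction. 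If you drop the chain and the Pinsker step, bound each of the $T$ counterfactuals separately, and carry the Lipschitz constant through the $K$ layers explicitly, you recover the paper's argument.
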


	\begin{equation}
		\sum_{t=i}^T\left\|\tilde{\mathbf{z}}_u^t-\mathbf{z}_u^t\right\|_p  \leq \sum_{t=i}^T \prod_{k=1}^K \left\|\mathbf{W}_a^k\right\|_p
	\end{equation}
	where $\mathbf{W}_a^k$ denotes the weight matrices of the $k-th$ GNN layer.

	\begin{proof}
		
		The representation output by layer $k$ of GNN is given by:

		\begin{equation}
			{\mathbf{z}}_u^k=\sigma\left(\mathbf{W}_a^k {\mathbf{z}}_u^{k-1}+\mathbf{W}_n^k \sum_{v \in \mathcal{N}(u)} \mathbf{z}_v^{k-1}\right)
		\end{equation}
		where $\mathcal{N}(u)$ is the neighborhood of node $u$. To simplify the representation, we only show the graph counterfactual for one pseudo-sensitive attribute. The difference between the node embeddings obtained after the message-passing in layer $k$ is:
		
		\begin{equation}
			\begin{aligned}
				\tilde{\mathbf{z}}_u^k - \mathbf{z}_u^k = 
				& \sigma\left( \mathbf{W}_a^k \tilde{\mathbf{z}}_u^{k-1} 
				+ \mathbf{W}_n^k \sum_{v \in \mathcal{N}(\tilde{u})} \mathbf{z}_v^{k-1} \right) \\
				& - \sigma\left( \mathbf{W}_a^k \mathbf{z}_u^{k-1} 
				+ \mathbf{W}_n^k \sum_{v \in \mathcal{N}(u)} \mathbf{z}_v^{k-1} \right)
			\end{aligned}
		\end{equation}
		
		Based on the inequality $\|\sigma(b)-\sigma(a)\|_p \leq\|b-a\|_p$, the euqation can be trainsformed as follows:

		\begin{equation}
			\begin{aligned}
				\left\|\tilde{\mathbf{z}}_u^k - \mathbf{z}_u^k\right\|_p 
				& \leq \left\| \mathbf{W}_a^k \left( \tilde{\mathbf{z}}_u^{k-1} - \mathbf{z}_u^{k-1} \right) \right. \\
				& \quad \left. + \mathbf{W}_n^k \left( \sum_{v \in \mathcal{N}(\tilde{u})} \mathbf{z}_v^{k-1} 
				- \sum_{v \in \mathcal{N}(u)} \mathbf{z}_v^{k-1} \right) \right\|_p
			\end{aligned}
		\end{equation}

		The second term in the above inequality is 0. Hence, we drop the second term and leverage Cauchy Schwartz inequality to get:
		
		\begin{equation}
			\begin{aligned}
				\left\|\tilde{\mathbf{z}}_u^k - \mathbf{z}_u^k\right\|_p 
				& \leq \left\| \mathbf{W}_a^k\left(\tilde{\mathbf{z}}_u^{k-1} - \mathbf{z}_u^{k-1}\right) \right\|_p \\
				& \leq \left\|\mathbf{W}_a^k\right\|_p \left\|\tilde{\mathbf{x}}^0_u - \mathbf{x}^0_u\right\|_p
			\end{aligned}
		\end{equation}
		
		Since $\tilde{\mathbf{x}}^{0}$ is generated by perturbing ${\mathbf{x}}^{0}$ with only a attribute. Hence, $\|\tilde{\mathbf{x}}^{0}_u-\mathbf{x}^{0}\|=1$ and the equation (29) is transformed as :
		\begin{equation}
			\left\|\tilde{\mathbf{z}}_u^k - \mathbf{z}_u^k\right\|_p \leq \left\|\mathbf{W}_a^k\right\|_p
		\end{equation}

		We consider that there exist $T$ pseudo-sensitive attributes and the layer of GNN is set as $K$. Hence, the difference in the upper bound is formulated as follows:
		
		\begin{equation}
			\sum_{t=i}^T\left\|\tilde{\mathbf{z}}_u^t-\mathbf{z}_u^t\right\|_p  \leq \sum_{t=i}^T \prod_{k=1}^K \left\|\mathbf{W}_a^k\right\|_p
		\end{equation}

	\end{proof}

\subsection{Convergence Analysis}

In this subsection, we analyze the convergence of Equation (16). Since the term $\|\lambda\|_2^2$ is independent of the parameters $\theta$, Equation (16) can be reformulated as:

\begin{equation}
\min_\theta \mathcal{L}(\theta) = \mathcal{L}_U(\theta) + \alpha \sum_{i=1}^I \lambda_i \sum_{k=1}^K \mathcal{D}_i(\mathbf{h}_i, \overline{\mathbf{h}}_i^k),
\end{equation}
where the regularization term $\mathcal{D}_i\left(\mathbf{h}_i, \overline{\mathbf{h}}_i^k\right)$ is defined as:

\begin{equation}
\mathcal{D}_i(\mathbf{h}_i, \overline{\mathbf{h}}_i^k) = \|\mathbf{h}_i - \overline{\mathbf{h}}_i^k\|_2^2.
\end{equation}

\begin{assumption}\label{assumption:bounded_derivatives}
The loss function $\mathcal{L}(\theta)$ has bounded first and second derivatives with respect to the parameters $\theta$. Specifically, there exist constants $G$ and $H$ such that for all $\theta$:
\begin{equation}
\left\|\nabla_\theta \mathcal{L}(\theta)\right\| \leq G \quad \text{and} \quad \left\|\nabla_\theta^2 \mathcal{L}(\theta)\right\| \leq H.
\end{equation}
\end{assumption}

\begin{lemma}\label{lemma:lipschitz_smooth}
Under Assumption \ref{assumption:bounded_derivatives}, the loss function $\mathcal{L}(\theta)$ is $L$-Lipschitz smooth, where $L = H$. Formally, for all $\theta_1, \theta_2$:
\begin{equation}
\left\| \nabla \mathcal{L}(\theta_1) - \nabla \mathcal{L}(\theta_2) \right\| \leq L \left\| \theta_1 - \theta_2 \right\|.
\end{equation}
\end{lemma}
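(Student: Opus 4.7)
The plan is to establish Lipschitz smoothness by using the integral form of the mean value theorem for the gradient: I would express $\nabla\mathcal{L}(\theta_1)-\nabla\mathcal{L}(\theta_2)$ as a line integral of the Hessian along the segment connecting $\theta_1$ and $\theta_2$, then control the integrand using the uniform operator-norm bound on the Hessian supplied by Assumption~\ref{assumption:bounded_derivatives}.

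Concretely, the first step is to invoke the fundamental theorem of calculus on the $C^1$ curve $t\mapsto\nabla\mathcal{L}\bigl(\theta_2+t(\theta_1-\theta_2)\bigr)$ on $[0,1]$, which is $C^1$ because Assumption~\ref{assumption:bounded_derivatives} guarantees a bounded (and hence, in the standard reading, existing and continuous) second derivative. This yields the identity
\begin{equation*}
\nabla\mathcal{L}(\theta_1)-\nabla\mathcal{L}(\theta_2)=\int_0^1 \nabla^2\mathcal{L}\bigl(\theta_2+t(\theta_1-\theta_2)\bigr)(\theta_1-\theta_2)\,dt.
\end{equation*}
The next step is to take norms on both sides, push the norm inside the integral by Minkowski's inequality, apply submultiplicativity of the operator norm, and finally use the uniform bound $\|\nabla^2\mathcal{L}(\theta)\|\leq H$ to obtain
\begin{equation*}
\|\nabla\mathcal{L}(\theta_1)-\nabla\mathcal{L}(\theta_2)\|\leq\int_0^1 H\,\|\theta_1-\theta_2\|\,dt = H\|\theta_1-\theta_2\|,
\end{equation*}
which is exactly the claim with $L=H$.

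There is no real obstacle here; the argument is a textbook consequence of ``bounded Hessian implies Lipschitz gradient.'' The only mild subtlety is interpretational: Assumption~\ref{assumption:bounded_derivatives} bounds $\|\nabla^2\mathcal{L}(\theta)\|$, which I would read as the operator (spectral) norm of the Hessian, so that the inequality $\|\nabla^2\mathcal{L}(\theta)(\theta_1-\theta_2)\|\leq\|\nabla^2\mathcal{L}(\theta)\|\cdot\|\theta_1-\theta_2\|$ is compatible with the Euclidean norm used on the gradient difference. I would flag this convention explicitly at the outset, after which the three-line calculation above completes the proof.
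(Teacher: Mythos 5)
Your proof is correct: the fundamental-theorem-of-calculus representation of the gradient difference as a line integral of the Hessian, followed by the triangle inequality for integrals and the operator-norm bound $\left\|\nabla^2\mathcal{L}(\theta)\right\|\leq H$, is the standard and complete derivation of ``bounded Hessian implies $H$-Lipschitz gradient.'' Note that the paper itself states this lemma without any proof, treating it as a known fact and invoking it directly in the descent-lemma step of Theorem~\ref{theorem:convergence_upper_bound}, so there is no in-paper argument to compare against; your write-up simply supplies the omitted textbook justification, and your remark about reading $\left\|\nabla^2\mathcal{L}(\theta)\right\|$ as the operator norm induced by the Euclidean norm is the right convention to make the bound consistent.
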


\begin{theorem}\label{theorem:convergence_upper_bound}
{(Upper bound of the gradient in convergence)} Consider the loss function $\mathcal{L}(\theta)$ and let $\{\theta^t\}_{t=0}^T$ be the sequence of parameters generated by the gradient descent algorithm:
\begin{equation}
\theta^{k+1} = \theta^k - \alpha \nabla \mathcal{L}(\theta^k),
\end{equation}
where the learning rate $\alpha$ satisfies $0 < \alpha < \frac{2}{L}$. Under Assumption \ref{assumption:bounded_derivatives}, the following bound holds:
\begin{equation}
\min_{0 \leq k < T} \|\nabla \mathcal{L}(\theta^k)\|_2^2 \leq \frac{\mathcal{L}(\theta^0) - \mathcal{L}(\theta^*)}{M T},
\end{equation}
where $\mathcal{L}(\theta^*)$ is the minimal loss value achieved, and $M = \alpha - \frac{L \alpha^2}{2} > 0$.
\end{theorem}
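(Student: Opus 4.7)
The plan is to run the standard non-convex convergence argument for gradient descent on an $L$-smooth objective, instantiated with $L = H$ from Lemma~1. First I would invoke the descent lemma that follows from $L$-Lipschitz smoothness: for any $\theta_1,\theta_2$,
\begin{equation}
\mathcal{L}(\theta_2) \leq \mathcal{L}(\theta_1) + \langle \nabla \mathcal{L}(\theta_1), \theta_2 - \theta_1 \rangle + \tfrac{L}{2}\|\theta_2 - \theta_1\|_2^2,
\end{equation}
which is a direct consequence of integrating $\nabla \mathcal{L}$ along the segment joining $\theta_1$ and $\theta_2$ and applying the Lipschitz bound given by Lemma~1. Substituting the gradient-descent update $\theta^{k+1} - \theta^k = -\alpha\,\nabla \mathcal{L}(\theta^k)$ into this inequality collapses the right-hand side to a single term in $\|\nabla \mathcal{L}(\theta^k)\|_2^2$ and yields the one-step descent estimate
\begin{equation}
\mathcal{L}(\theta^{k+1}) \leq \mathcal{L}(\theta^k) - \bigl(\alpha - \tfrac{L\alpha^2}{2}\bigr)\|\nabla \mathcal{L}(\theta^k)\|_2^2 = \mathcal{L}(\theta^k) - M\,\|\nabla \mathcal{L}(\theta^k)\|_2^2,
\end{equation}
with $M>0$ because $0<\alpha<2/L$.

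Next I would telescope this inequality over $k=0,1,\ldots,T-1$ to convert the per-step decrease into a cumulative bound. Rearranging and summing gives
\begin{equation}
M \sum_{k=0}^{T-1} \|\nabla \mathcal{L}(\theta^k)\|_2^2 \leq \mathcal{L}(\theta^0) - \mathcal{L}(\theta^T) \leq \mathcal{L}(\theta^0) - \mathcal{L}(\theta^*),
\end{equation}
where the last inequality uses that $\mathcal{L}(\theta^*)$ is the minimum value attained. Finally, I would use the elementary fact that the minimum of a finite sequence of nonnegative reals is bounded above by its average, so
\begin{equation}
\min_{0 \leq k < T} \|\nabla \mathcal{L}(\theta^k)\|_2^2 \leq \frac{1}{T}\sum_{k=0}^{T-1}\|\nabla \mathcal{L}(\theta^k)\|_2^2 \leq \frac{\mathcal{L}(\theta^0) - \mathcal{L}(\theta^*)}{M T},
\end{equation}
which is exactly the claimed bound.

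There is no real mathematical obstacle here: the result is the textbook $\mathcal{O}(1/T)$ rate for gradient descent on smooth non-convex objectives, and the two genuine ingredients (descent lemma and telescoping) are standard. The only point that deserves some care in the write-up is verifying that the hypotheses of Lemma~1 really do apply to our particular $\mathcal{L}(\theta) = \mathcal{L}_U(\theta) + \alpha \sum_i \lambda_i \sum_k \|\mathbf{h}_i - \overline{\mathbf{h}}_i^k\|_2^2$ uniformly in $\theta$ during training; since the $\lambda_i$ are bounded in $[0,1]$ by the simplex constraint and the regularizer is a fixed quadratic in the embeddings, Assumption~1 transfers cleanly and the $L=H$ identification of Lemma~1 is legitimate throughout the run.
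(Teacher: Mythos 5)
Your proposal is correct and follows essentially the same route as the paper's proof: the descent lemma from $L$-smoothness, substitution of the gradient-descent update to obtain the one-step decrease with constant $M=\alpha-\tfrac{L\alpha^2}{2}$, telescoping over $T$ iterations, and bounding the minimum squared gradient norm by the average (which the paper phrases as a pigeonhole argument). Your closing remark about checking that Assumption~1 actually holds for the specific composite objective is a reasonable extra caution that the paper does not address explicitly.
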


\begin{proof}
To establish the convergence of the loss function $\mathcal{L}(\theta)$, we utilize the properties of Lipschitz smoothness and bounded gradients as specified in Assumption \ref{assumption:bounded_derivatives}.

Using Lemma \ref{lemma:lipschitz_smooth}, for a Lipschitz smooth function, we have:

\begin{equation}
\begin{aligned}
\mathcal{L}\left(\theta^{k+1}\right) \leq \mathcal{L}\left(\theta^k\right) 
& + \left\langle\nabla \mathcal{L}\left(\theta^k\right), \theta^{k+1}-\theta^k\right\rangle \\
& + \frac{L}{2}\left\|\theta^{k+1}-\theta^k\right\|^2
\end{aligned}
\end{equation}

To substitute the update rule $\theta^{k+1} = \theta^k - \alpha \nabla \mathcal{L}(\theta^k)$ into Equation (38), we obtain the following expression:

\begin{equation}
\mathcal{L}\left(\theta^{k+1}\right) \leq \mathcal{L}\left(\theta^k\right)-\alpha\left\|\nabla \mathcal{L}\left(\theta^k\right)\right\|^2+ \frac{L \alpha^2}{2}\left\|\nabla \mathcal{L}\left(\theta^k\right)\right\|^2
\end{equation}

Let $M = \alpha - \frac{L \alpha^2}{2} > 0$, ensuring that $0 < \alpha < \frac{2}{L}$, the equation (39) is transformed as:

\begin{equation}
\mathcal{L}\left(\theta^{k+1}\right) \leq \mathcal{L}\left(\theta^k\right)-M\left\|\nabla \mathcal{L}\left(\theta^k\right)\right\|^2
\end{equation}

We sum the inequality from $k = 0$ to $k = T-1$ and then take the average over $T$ iterations:

\begin{equation}
\frac{1}{T} \sum_{k=0}^{T-1}\left\|\nabla \mathcal{L}\left(\theta^k\right)\right\|^2 \leq \frac{\mathcal{L}\left(\theta^0\right)-\mathcal{L}\left(\theta^*\right)}{M T}
\end{equation}

By the pigeonhole principle, there exists at least one iteration $m \in \{0, \ldots, T-1\}$ such that:

\begin{equation}
\left\|\nabla \mathcal{L}\left(\theta^m\right)\right\|^2 \leq \frac{\mathcal{L}\left(\theta^0\right)-\mathcal{L}\left(\theta^*\right)}{M T}
\end{equation}

\end{proof}

As the number of iterations $T$ increases, the bound $\frac{\mathcal{L}(\theta^0) - \mathcal{L}(\theta^*)}{M T}$ approaches zero. This implies that the gradient norm gets smaller as $T$ grows. Therefore, the optimization algorithm converges to a stationary point.

	\begin{table*}
		\caption{Real-world dataset statistics}
             \centering
             \resizebox{\textwidth}{!}{
		\begin{tabular}{ccccccccc}
			\hline Dataset & \#Nodes & \#attributes & \#Edges & Average Degree & Sens. & Label & \#Train/Val/Test & Description \\
			\hline Bail & 18,876 & 18 & 311,870 & 34.04 & Race & Bail/no bail & $50 \% 25 \% 25 \%$ & Semi-synthetic \\
			Credit & 30,000 & 13 & 1,421,858 & 95.79& Age& default/no default  & $50 \% 25 \% 25 \%$ & Semi-synthetic\\
			Pokec-z & 67,797 & 277 & 617,958 & 19.23 & Region & Working Field & $50 \% 25 \% 25 \%$ & Facebook \\
			Pokec-n & 66,569 & 266 & 517,047 & 16.53 & Region & Working Field & $50 \% 25 \% 25 \%$ & Facebook \\
			NBA & 403 & 39 & 10,621 & 53.71 & Nationality & Salary & $50 \% 25 \% 25 \%$ & Twitter \\
			Occupation & 6,951 & 768 & 44,166 & 13.71 & Gender & Psy/CS & $50 \% 25 \% 25 \%$ & Twitter \\
			\hline
		\end{tabular}}
	\end{table*}
	
	\begin{table*}
		\caption{Comparison of the performance of node classification. The best result is bold. The second result is \underline{underlined}}
		\setlength{\tabcolsep}{1.2pt}
		\centering
		\begin{tabular}{c|c|ccc|ccc|ccc}
			\hline & \multirow{2}{*}{ Methods } & \multicolumn{3}{|c|}{ Recidivism } & \multicolumn{3}{|c|}{ Pokec-z } & \multicolumn{3}{|c}{ Pokec-n } \\
			\cmidrule(lr){3-5} \cmidrule(lr){6-8} \cmidrule(lr){9-11}
			& & $\mathrm{ACC}(\uparrow)$ & $\Delta_{D P}(\downarrow)$ & $\Delta_{E O}(\downarrow)$ & $\mathrm{ACC}(\uparrow)$ & $\Delta_{D P}(\downarrow)$ & $\Delta_{E O}(\downarrow)$ & $\mathrm{ACC}(\uparrow)$ & $\Delta_{D P}(\downarrow)$ & $\Delta_{E O}(\downarrow)$ \\
			\hline\multirow{6}{*}{ GCN } & Vanilla$\backslash \mathrm{S}$ & $83.89 \pm 0.76$ & $5.69 \pm 0.22$ & $\mathbf{3.42 \pm 0.55}$ & $69.74 \pm 0.22$ & $8.11 \pm 0.44$ & $6.41 \pm 0.47$ & $\underline{68.88 \pm 0.32}$ & $\underline{1.39 \pm 0.84}$ & $\underline{2.57 \pm 1.19}$ \\
			& RemoveR & $84.74 \pm 0.39$ & $6.08 \pm 0.21$ & $4.26 \pm 0.61$ & $\underline{69.77 \pm 0.31}$ & $8.00 \pm 0.48$ & $6.63 \pm 0.69$ & $68.65 \pm 0.24$ & $2.40 \pm 0.53$ & $3.45 \pm 1.10$ \\
			& KSMOTE & $83.93 \pm 0.81$ & $\underline{5.64 \pm 0.22}$ & $\underline{3.35 \pm 0.39}$ & $69.74 \pm 0.24$ & $7.99 \pm 0.48$ & $6.25 \pm 0.49$ & $68.69 \pm 0.29$ & $2.72 \pm 0.56$ & $3.60 \pm 0.64$ \\
			& FairRF & $83.96 \pm 0.84$ & $5.67 \pm 0.21$ & $4.40 \pm 0.36$ & $69.07 \pm 0.50$ & $\underline{7.57 \pm 0.60}$ & $6.77 \pm 0.81$ & $68.65 \pm 0.28$ & $2.79 \pm 0.61$ & $3.76 \pm 0.88$ \\
			& FairGKD$\backslash \mathrm{S}$ & $\underline{86.09 \pm 1.05}$ & $5.98 \pm 0.47$ & $4.47 \pm 0.72$ & $69.42 \pm 0.37$ & $7.72 \pm 0.90$ & $\underline{5.90 \pm 0.99}$ & $68.04 \pm 0.54$ & $2.46 \pm 0.86$ & $3.62 \pm 0.94$ \\
			
			&Fairwos & $\mathbf{86.56 \pm 2.74}$ & $\mathbf{5.06 \pm 1.42}$ &  $3.91 \pm 0.88$ & $\mathbf{70.60 \pm 0.34}$ & $\mathbf{5.03 \pm 1.70}$ & $\mathbf{4.96 \pm 2.02}$ & $\mathbf{70.44 \pm 0.34}$ & $\mathbf{1.25 \pm 0.78}$ & $\mathbf{1.83 \pm 1.03}$ \\
			
			\hline \multirow{6}{*}{ GIN } & Vanillal$\backslash \mathrm{S}$ & $72.49 \pm 5.19$ & $6.04 \pm 1.41$ & $4.90 \pm 1.64$ & $\underline{68.85 \pm 0.52}$ & $\underline{4.35 \pm 2.10}$ & $5.37 \pm 1.78$ & $68.41 \pm 0.58$ & $\underline{1.54 \pm 1.49}$ & $4.27 \pm 2.33$ \\
			& RemoveR & $72.81 \pm 5.46$ & $6.32 \pm 1.61$ & $4.22 \pm 1.88$ & $68.23 \pm 0.33$ & $4.96 \pm 0.65$ & $\underline{5.15 \pm 1.03}$ & $\underline{68.65 \pm 0.64}$ & $1.82 \pm 1.16$ & $\underline{3.48 \pm 1.75}$ \\
			& KSMOTE & $74.67 \pm 5.34$ & $6.21 \pm 1.52$ & $4.18 \pm 2.31$ & $68.71 \pm 0.47$ & $5.11 \pm 1.20$ & $5.66 \pm 1.69$ & $68.09 \pm 1.61$ & $2.22 \pm 2.21$ & $4.91 \pm 2.62$ \\
			& FairRF & $74.67 \pm 5.74$ & $5.97 \pm 1.48$ & $\underline{4.15 \pm 2.78}$ & $68.78 \pm 0.53$ & $5.31 \pm 1.14$ & $5.70 \pm 1.54$ & $68.23 \pm 1.57$ & $2.11 \pm 2.06$ & $5.08 \pm 2.31$ \\
			& FairGKD$\backslash \mathrm{S}$ & $\mathbf{79.98 \pm 2.45}$ & $\underline{5.89 \pm 1.56}$ & $4.35 \pm 0.96$ & $\mathbf{68.87 \pm 0.35}$ & $4.78 \pm 1.03$ & $5.44 \pm 1.38$ & $68.48 \pm 0.63$ & $2.10 \pm 0.81$ & $4.57 \pm 1.32$ \\

			& Fairwos  & $\underline{79.80 \pm 3.43}$ & $\mathbf{4.36 \pm 0.94}$ & $\mathbf{3.55 \pm 1.15}$ & $68.35 \pm 1.17$ & $\mathbf{3.49 \pm 2.14}$ & $\mathbf{2.83 \pm 1.63}$ & $\mathbf{69.27 \pm 0.76}$ & $\mathbf{1.49 \pm 1.47}$ & $\mathbf{2.78 \pm 1.78}$ \\

			\hline & \multirow{2}{*}{ Methods } & \multicolumn{3}{|c|}{$N B A$} & \multicolumn{3}{|c|}{ Credit } & \multicolumn{3}{|c}{ Occupation } \\
			\cmidrule(lr){3-5} \cmidrule(lr){6-8} \cmidrule(lr){9-11}
			& & $\mathrm{ACC}(\uparrow)$ & $\Delta_{D P}(\downarrow)$ & $\Delta_{E O}(\downarrow)$ & $\mathrm{ACC}(\uparrow)$ & $\Delta_{D P}(\downarrow)$ & $\Delta_{E O}(\downarrow)$ & $\mathrm{ACC}(\uparrow)$ & $\Delta_{D P}(\downarrow)$ & $\Delta_{E O}(\downarrow)$ \\
			\hline\multirow{6}{*}{ GCN } & Vanilla$\backslash \mathrm{S}$ & $\underline{66.38 \pm 1.64}$ & $28.34 \pm 3.86$ & $23.70 \pm 4.88$ & $\mathbf{73.77 \pm 1.57}$ & $11.63 \pm 4.25$ & $9.58 \pm 3.83$ & $\mathbf{81.99 \pm 0.34}$ & $28.56 \pm 0.56$ & $17.10 \pm 1.53$ \\
			& RemoveR & $65.92 \pm 1.80$ & $24.82 \pm 5.83$ & $19.04 \pm 7.42$ & $\underline{73.71 \pm 0.52}$ & $12.88 \pm 2.97$ & $10.32 \pm 2.14$ & $\underline{81.82 \pm 0.16}$ & $28.71 \pm 0.48$ & $16.44 \pm 0.31$ \\
			& KSMOTE & $60.80 \pm 0.57$ & $23.53 \pm 2.37$ & $17.40 \pm 6.87$ & $73.54 \pm 0.08$ & $11.89 \pm 0.31$ & $9.61 \pm 0.33$ & $79.88 \pm 0.12$ & $27.65 \pm 0.32$ & $17.40 \pm 0.53$ \\
			& FairRF & $61.13 \pm 0.66$ & $23.44 \pm 3.11$ & $17.28 \pm 7.50$ & $73.57 \pm 0.08$ & $ \underline{11.73 \pm 0.15}$ & $9.47 \pm 0.18$ & $79.74 \pm 0.10$ & $\underline{27.22 \pm 0.28}$ & $17.12 \pm 0.67$ \\
			& FairGKD$\backslash \mathrm{S}$ & $62.02 \pm 2.13$ & $\underline{16.52 \pm 6.99}$ & $\underline{12.32 \pm 7.56}$ & $73.33 \pm 0.19$ & $11.98 \pm 0.48$ & $9.65 \pm 0.42$ & $81.49 \pm 0.36$ & $27.74 \pm 1.09$ & $\underline{14.20 \pm 1.60}$ \\
			
			& Fairwos & $\mathbf{68.22 \pm 1.65}$ & $\mathbf{10.16 \pm 5.20}$ & $\mathbf{7.16 \pm 4.83}$ & $73.54 \pm 3.46$ & $\mathbf{9.22 \pm 5.44}$ & $\mathbf{7.55 \pm 4.85}$ & $81.76 \pm 0.61$ & $\mathbf{25.16 \pm 1.31}$ & $\mathbf{13.34 \pm 1.95}$ \\
			
			\hline \multirow{6}{*}{ GIN } & Vanilla$\backslash \mathrm{S}$ & $59.15 \pm 2.13$ & $11.66 \pm 6.63$ & $13.78 \pm 7.90$ & $74.52 \pm 0.83$ & $11.37 \pm 1.97$ & $8.93 \pm 1.96$ & $\underline{80.82 \pm 0.43}$ & $25.70 \pm 1.40$ & $14.90 \pm 2.43$ \\
			& RemoveR & $\underline{61.55 \pm 1.56}$ & $13.49 \pm 6.90$ & $12.29 \pm 7.89$ & $\underline{74.72 \pm 0.75}$ & $12.54 \pm 1.52$ & $11.54 \pm 1.50$ & $\mathbf{81.10 \pm 0.48}$ & $27.77 \pm 1.24$ & $13.16 \pm 2.17$ \\
			& KSMOTE & $58.69 \pm 3.20$ & $15.07 \pm 11.61$ & $14.71 \pm 11.66$ & $73.55 \pm 2.20$ & $\underline{10.71 \pm 3.06}$ & $\underline{8.00 \pm 3.12}$ & $77.54 \pm 3.63$ & $\underline{22.35 \pm 5.71}$ & $12.53 \pm 4.84$ \\
			& FairRF & $57.56 \pm 4.34$ & $11.38 \pm 10.15$ & $11.4 \pm 10.56$ & $73.58 \pm 1.52$ & $10.89 \pm 2.15$ & $8.28 \pm 2.33$ & $79.50 \pm 1.04$ & $24.55 \pm 1.72$ & $\mathbf{11.29 \pm 3.29}$ \\
			& FairGKD$\backslash S$ & $61.13 \pm 2.71$ & $\underline{9.53 \pm 3.07}$ & $\underline{7.61 \pm 5.60}$ & $73.31 \pm 1.73$ & $10.80 \pm 6.01$ & $8.54 \pm 6.86$ & $80.41 \pm 0.50$ & $25.64 \pm 1.30$ & $13.52 \pm 2.16$ \\

			& Fairwos & $\mathbf{62.77 \pm 2.22}$ & $\mathbf{8.69 \pm 7.28}$ & $\mathbf{7.01 \pm 4.90}$ & $\mathbf{75.73 \pm 2.76}$ & $\mathbf{9.42 \pm 3.93}$ & $\mathbf{7.74 \pm 3.39}$ & $80.61 \pm 0.80$ & $\mathbf{19.98 \pm 2.97}$ & $\underline{12.08 \pm 1.67}$ \\

			\hline

		\end{tabular}
		
	\end{table*}

	\section{Experiments}
	
	In this section, we conduct experiments to evaluate the proposed Fairwos's effectiveness in fairness and classification performance when sensitive attributes are unavailable. In particular, we aim to answer the following research questions: 
	
		\begin{itemize}
		\item (\textbf{RQ1}): Can the proposed Fairwos achieve fairness without sensitive attributes while maintaining high utility?
		\item (\textbf{RQ2}): How do the encoder, fairness promotion, and weight updating modules affect Fairwos?
		\item (\textbf{RQ3}): How does the dimension of the encoder affect the model performance?
		\item (\textbf{RQ4}): How would different choices of hyper-parameters influence the performance of Fairwos?
       \item (\textbf{RQ5}): What do pseudo-sensitive attributes represent when sensitive attributes are hidden during training?
       \item  (\textbf{RQ6}): How does Fairwos' runtime performance compare to baseline methods and its variants?

	\end{itemize}

	\subsection{Experimental Settings}
	
	\subsubsection{Datasets} 
	
	We conduct the experiments on six publicly available benchmark datasets, including Bail \cite{ref32}, Credit \cite{ref33}, NBA \cite{ref12}, Pokec-z \cite{ref34}, Pokec-n \cite{ref34}, and Occupation \cite{ref35}. The statistics of the above datasets are given in Table I and the details of the datasets are as follows:

	\textbf{Bail:} The bail dataset has 18876 nodes representing defendants released on bail during 1990-2009. The nodes are connected based on the similarity calculated based on past criminal records and demographics. The task is to classify the defendants into bail or not. Race is considered a sensitive attribute. 
	
	\textbf{Credit:} The credit dataset consists of 30000 nodes and the nodes with similar spending and payment patterns are connected. The task is to predict whether or not the user is paying using a credit card by default. Age is treated as a sensitive attribute. 
	
	\textbf{NBA:} The NBA dataset contains around 403 players with information from the 2016-2017 season. The link data is crawled from Twitter. The nationality is set as the sensitive attribute. The classification task is to predict whether the player's salary is above average. 
	
	\textbf{Pokec:} Pokec dataset consists of Pokec-z and Pokec-n. They were sampled from Slovakia's most popular social network in 2012, which belongs to two major regions of the corresponding provinces. Nodes denote users, and an edge represents the friendship between nodes. The region is regarded as a sensitive attribute. The task is to predict the users' working field. 
	
	\textbf{Occupation:} The occupation dataset originated from Twitter, where the node denotes the user and the edges represent the friendships. The task is to predict whether a user focuses on computer science or psychology. Gender is regarded as a sensitive attribute.

	\subsubsection{Metrics}

	We evaluate the model performance from two perspectives: classification performance and fairness. We adopt ACC to measure the performance on node classification tasks for classification performance. The ACC close to 1 indicates better classification performance. For fairness performance, we follow existing work on fair GNN to use two widely applied evaluation metrics, i.e., equal opportunity ($\Delta_{EO}$) \cite{ref29} and demographic parity ($\Delta_{SP}$) \cite{ref36}. Specifically, the $\Delta_{SP}$ is computed as: 
	\begin{equation}
		\Delta_{S P}=|P(\hat{y}=1 \mid s=0)-P(\hat{y}=1 \mid s=1)|
	\end{equation}
	and $\Delta_{EO}$ is computed as 
	\begin{equation}
		\Delta_{E O}=|P(\hat{y}=1 \mid y=1, s=0)-P(\hat{y}=1 \mid y=1, s=1)|
	\end{equation}
	where the probability $P()$ is estimated on the test dataset. The model is better when $\Delta_{E O}$ and $\Delta_{SP}$ are smaller.
	
	\subsubsection{Baselines} 
	
	To evaluate the effectiveness of Fairwos, we compare Fairwos with the following fair learning without sensitive attributes.

	\textbf{Vanilla$\backslash \mathrm{S}$}: It is the backbone GNN, which is trained without sensitive attributes. 
	
	\textbf{KSMOTE}\cite{ref37}: This obtains the pseudo-groups by clustering and enforces fairness by regularizing model prediction based on these pseudo-groups.
	
	\textbf{RemoveR}: It directly pre-processes the graph data by removing all the candidate-related attributes.
	
	\textbf{FairRF}\cite{ref23}: This minimizes the correlation with the prediction with related sensitive attributes.
	
	\textbf{FairGKD$\backslash \mathrm{S}$}\cite{ref27}: It adopts partial knowledge distillation for learning fair graph representation without sensitive attributes.

	Since KSMOTE, and FairRF are not designed for GNNs, we directly use the code provided by \cite{ref23, ref37} on our backbone GNN.

	\subsubsection{Implementation Details} 
	
	The experiments were conducted on a machine with Intel(R) Xeon(R) Platinum 8352V CPU @ 2.10GHz and NVIDIA GeForce RTX 4090 with 24GB memory and CUDA 11.8. The operating system is Ubuntu 20.04 with 120GB memory. Each dataset is randomly split into 50\%/25\%/25\% training/ validation/ testing data. The ADM optimizer with a learning rate of 0.001 accomplishes the optimization. For the backbone model, the layer and hidden unit number are set as 1 and 16, respectively. For hyperparameter selection, we vary $\alpha$ as $\{0.01, 0.05, 1, 2, 5\}$ and $K$ as $\{1, 2, 5, 10, 20\}$ and the best model is saved based on the performance of the validation dataset. The hidden unit layer is determined through a grid search with values from 4 to 16. We pre-train a GNN model with high utility and then fine-turning it to promote fairness. The epoch of the first process is set as 1000 and the second process as 15. We use early stop operation to preserve competitive utility performance.

	\subsection {Performance Comparison}
	
	To answer \textbf{RQ1}, we experiment on six datasets with a comparison to the baselines. Each experiment is conducted 10 times. The performance of prediction and fairness with standard deviation is shown in Table II. The best results are highlighted in bold and the second result is underlined. From Table II, we have the following observations. 
	
	\begin{itemize}
		\item From the perspective of model utility, our proposed Fairwos provides competitive performance. Compared with the Vanilla model, the Fairwos improves fairness without causing a significant performance drop. Besides, the ACC is higher than some baselines. The reason is the encoder module learns the crucial information while reducing the dimensions of attributes. This process removes noise and redundant information, which improves the model's generalization. With fewer dimensions, the risk of overfitting decreases, allowing the model to achieve higher accuracy on new data.
		\item From the perspective of fairness, our proposed framework outperforms most of the baseline methods that learn fair representation without sensitive attributes. On the one hand, these baselines are designed for independently and identically distributed data rather than graph data. On the other hand, we focus on eliminating the root cause of unfairness and getting rid of the correlation-induced abnormal behaviors. Furthermore, Fairwos performs better than state-of-the-art (SOTA) methods that learn fair graph representation without sensitive attributes.  
		\item From the perspective of balancing the model utility and fairness, Fairwos achieves competitive performance on the prediction and superior fairness promotion in all cases compared with other baselines. In some cases, both the utility and fairness of Fairwos outperform the SOTA method. We claim that Fairwos achieves better performance on balance utility and fairness.
	\end{itemize}

	These observations demonstrate that Fairwos achieves superior performance in balancing utility and fairness.

\begin{figure}[t]
		\centering
		\captionsetup{justification=centering}
		\includegraphics[width=0.5\textwidth]{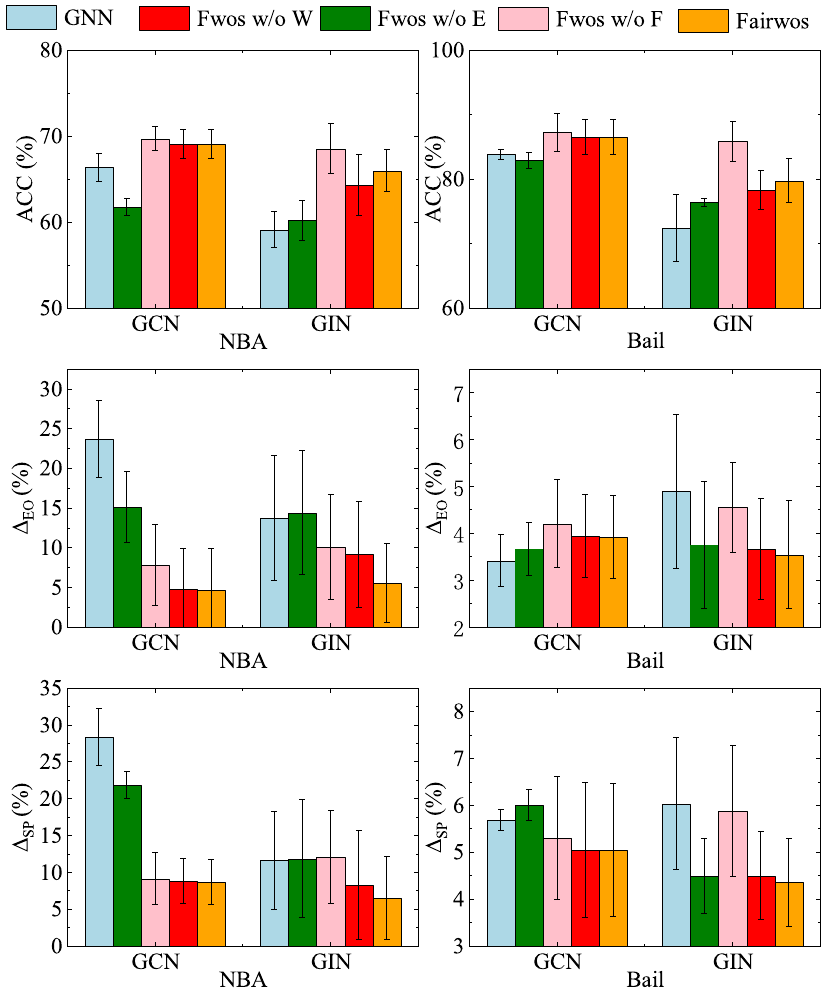} 
		\caption{Comparisons between Fairwos and its variants on NBA and Bail dataset}
		\label{fig:example}
	\end{figure}
	
	\begin{figure}[t]
		\centering
		\captionsetup{justification=centering}
		\includegraphics[width=0.5\textwidth]{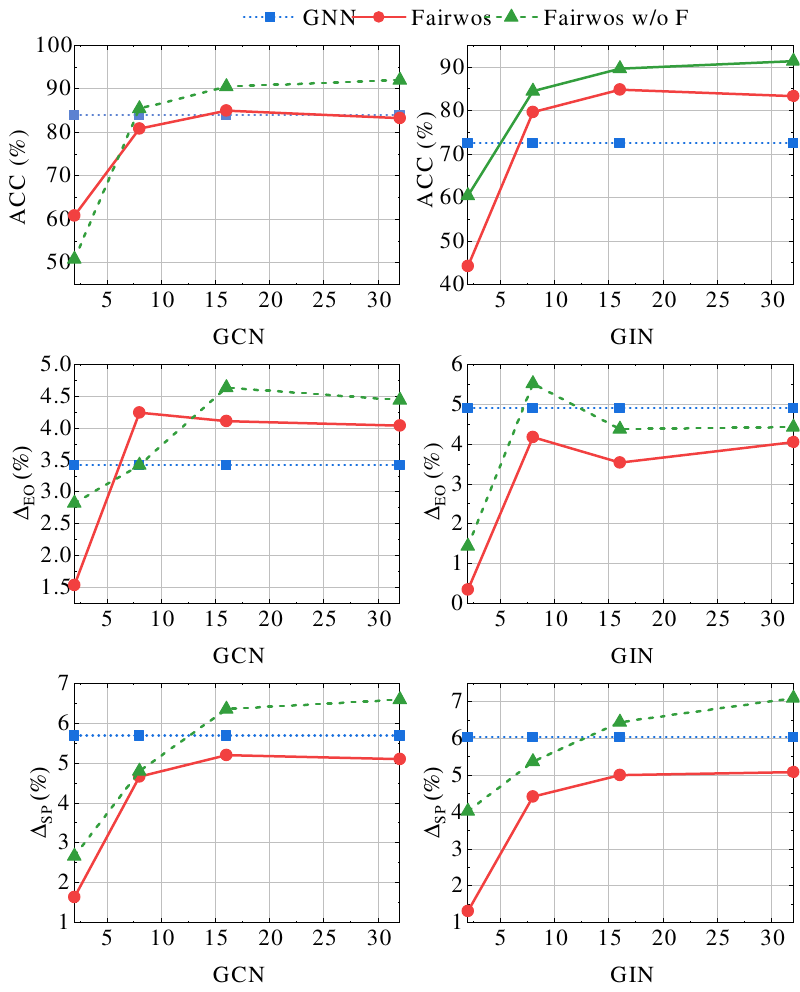} 
		\caption{Impacts of the dimension of the encoder to Fairwos on GCN}
		\label{fig:example}
	\end{figure}

    	\begin{figure*}[t]
		\centering
		\begin{subfigure}{0.36\textwidth}
			\includegraphics[width=\linewidth]{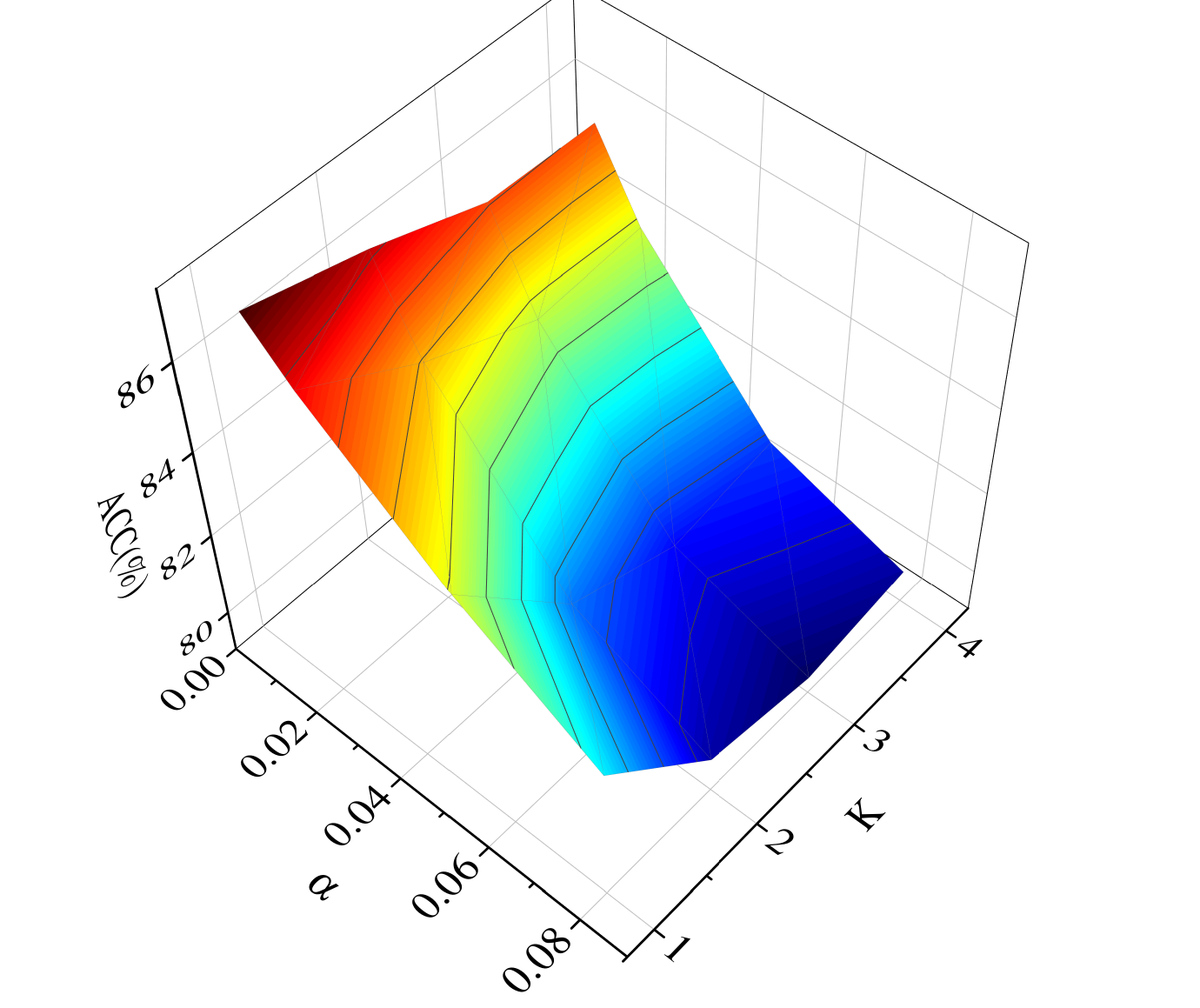}
			\caption{ACC}
		\end{subfigure}
		\hspace{-100pt}
		\hfill
		\begin{subfigure}{0.36\textwidth}
			\includegraphics[width=\linewidth]{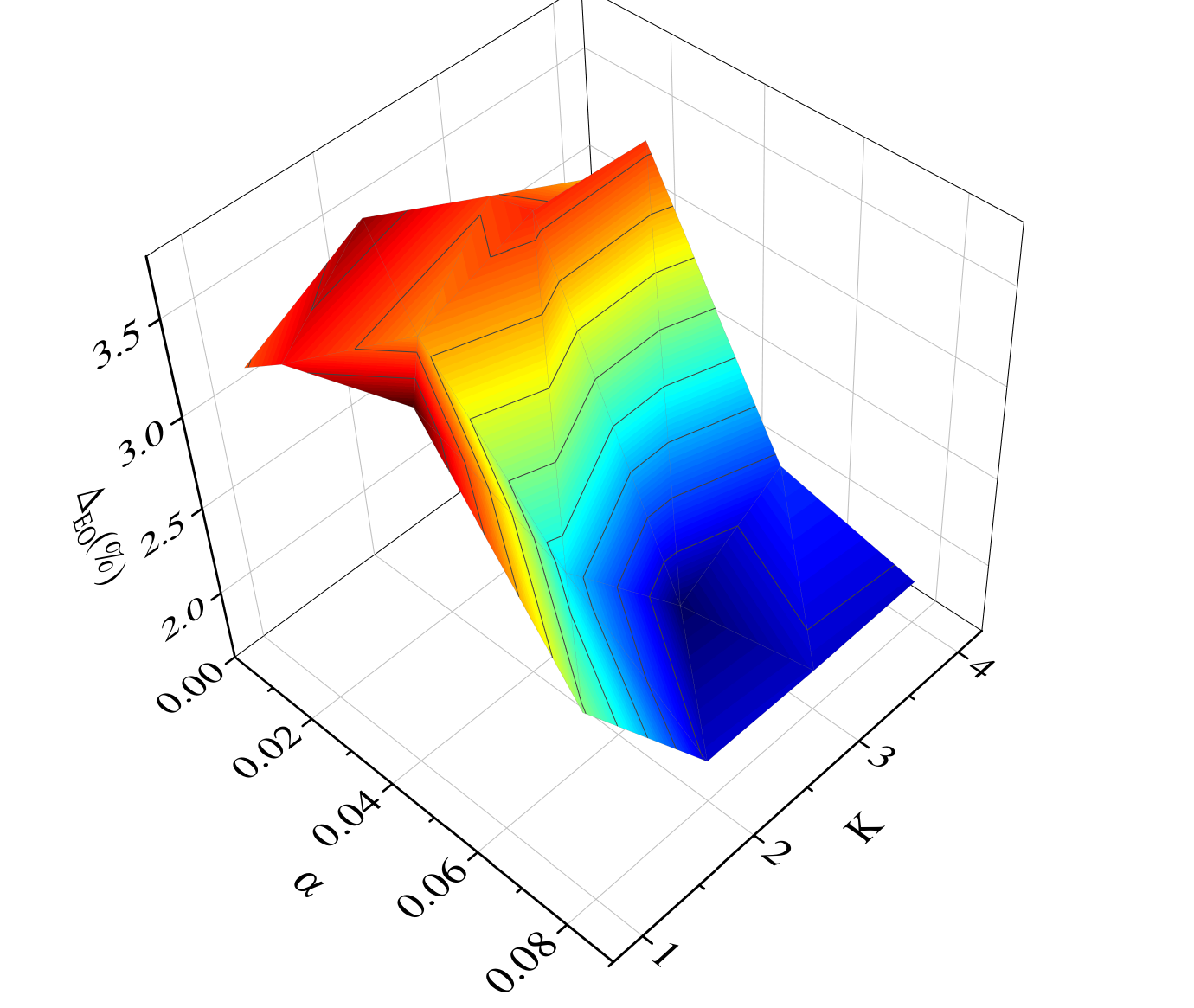}
			\caption{$\Delta_{EO}$}
		\end{subfigure}
		\hspace{-100pt}
		\hfill
		\begin{subfigure}{0.36\textwidth}
			\includegraphics[width=\linewidth]{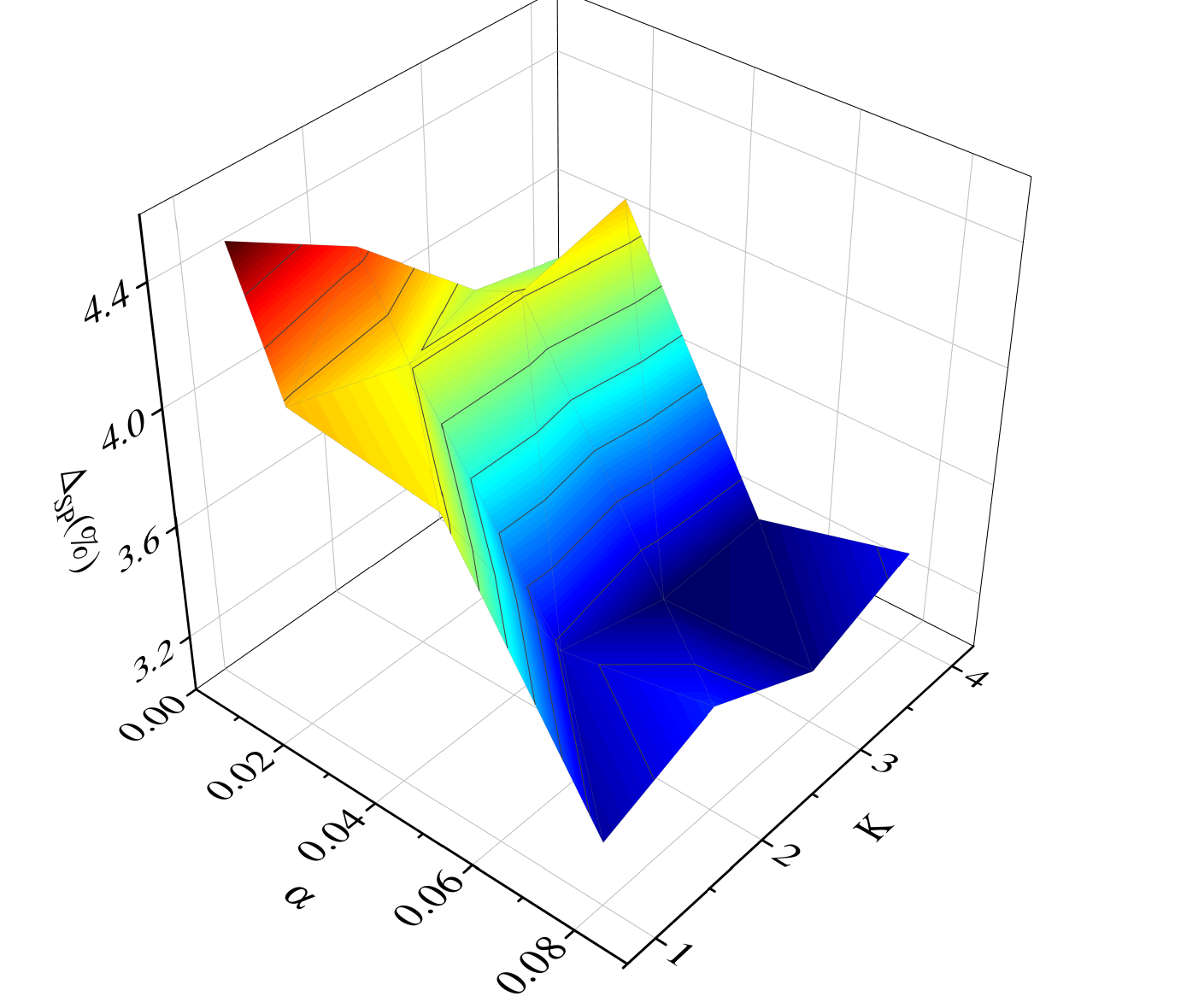}
			\caption{$\Delta_{SP}$}
		\end{subfigure}
		\caption{Hyper-parameter study on Bail dataset}
	\end{figure*}

	\subsection{Ablation Studies}

	To answer \textbf{RQ2}, we conduct an ablation study to fully understand the effect of each component of Fairwos on promoting fairness. Specifically, we denote \textbf{Fwos w/o E} as removing the encoder module, \textbf{Fwos w/o F} as removing the module of fairness promotion, and \textbf{Fwos w/o W} as removing the weight updating module. Similarly, for each variant, we conducted the experiment 10 times on bail and NBA, respectively. The average performance of fairness and utility with standard deviation is shown in Fig. 4. We adopt two widely applied GNNs as the backbone model of Fairwos, i.e., GCN and GIN.  From Fig. 4, we observe:
	
	\begin{itemize}
		\item The ACC of \textit{Fwos w/o E} is lower than GNN and other variants such as \textit{Fwos w/o F}, \textit{Fwos w/o W} and \textit{Fairwos} are higher than GNN. This demonstrates the encoder's ability to enhance utility. On the one hand, the encoder reduces the noise and redundant information and preserves the information that is more useful for the task. On the other hand, this way prevents overfitting and helps GNN to learn more general representation.
		
		\item The $\Delta_{SP}$ and $\Delta_{EO}$ of \textit{Fwos w/o E}, \textit{Fwos w/o F}, and \textit{Fwos w/o W} are smaller than those of GNNs in Fig. 4. This indicates that the encoder, fairness promotion, and weight updating modules all contribute to improving fairness. By using the encoder, fairness promotion is applied to pseudo-sensitive attributes instead of all non-sensitive attributes individually. This shows that promoting fairness through pseudo-sensitive attributes is more effective than using all non-sensitive attributes. The fairness promotion module further reduces the influence of sensitive attributes, while the weight learning module minimizes over-reliance on specific attributes. Therefore, \textit{Fairwos} achieves better results compared to other alternatives.
		
	\end{itemize}

	\subsection{Hyper-Parameter Sensitivity Analysis}
	
	To answer \textbf{RQ3}, we explore the sensitivity of the encoder's dimension. We alter the value of the dimension among \{2, 8, 16, 32\} and conduct experiments on different GNN backbones: GCN and GIN. The results are presented in Fig. 5, respectively. From these figures, we can find that the decrease in dimension leads to a decrease in terms of accuracy and bias. However, the presence of the encoder makes the accuracy of \textit{Fairwos w/o F} still higher than the backbone model until dimension 8. As the dimension continues to decrease, the accuracy of the model is much lower than the backbone model and the bias of the model decreases substantially. This indicates that the encoder can improve the utility of the model. However, as the dimension decreases, too much information is compressed which is no longer sufficient to support accurate prediction. 
	
	To answer \textbf{RQ4}, we analyze the sensitivity of Fairwos on two important hyperparameters, i.e., $\alpha$ and $K$. $\alpha$ controls the importance of the fairness regularization term and $K$ controls the number of graph counterfactuals. Specifically, we vary $\alpha$ as $\{0.01, 0.02, 0.04, 0.08\}$ and $K$ as $\{1, 2, 3, 4\}$ and ensure other parameters remain unchanged. 	The experiments are conducted on the bail dataset, and the results are shown in Fig. 6. From the figure, we can observe: that there exists a trade-off between utility and fairness. Increasing $\alpha$ and $K$ both improve the fairness. However, larger $\alpha$ and $K$ cause severe drops in the utility when they are larger than some threshold. Besides, the fairness can't be improved when  $\alpha$ or $K$ are smaller than a certain threshold.

    \subsection{Visualization of Pseudo-sensitive Attributes}

    To address \textbf{RQ5}, we visualize the pseudo-sensitive attributes on the NBA and Occupation dataset to better understand what they represent. Subsequently, we use t-SNE \cite{ref38} to map the pseudo-sensitive attribute into 2-dimensional space for visualization. We select samples only from the test set to enhance clarity, which aligns with our assumption that sensitive attributes are accessible only during testing. The results in Fig. 7(a) and Fig. 7(b) show some separation between clusters based on the pseudo-sensitive attributes. This suggests that the pseudo-sensitive attributes capture certain aspects of the sensitive attributes. Since pseudo-sensitive attributes replace non-sensitive attributes as inputs, any influence of sensitive attributes on the model's predictions occurs indirectly through these pseudo-sensitive attributes. Hence, we aim to minimize this indirect influence by reducing the correlation between pseudo-sensitive attributes and model predictions.

  \begin{figure}[t]
    \centering
    \begin{subfigure}[b]{0.24\textwidth}
        \centering
        \includegraphics[width=\textwidth]{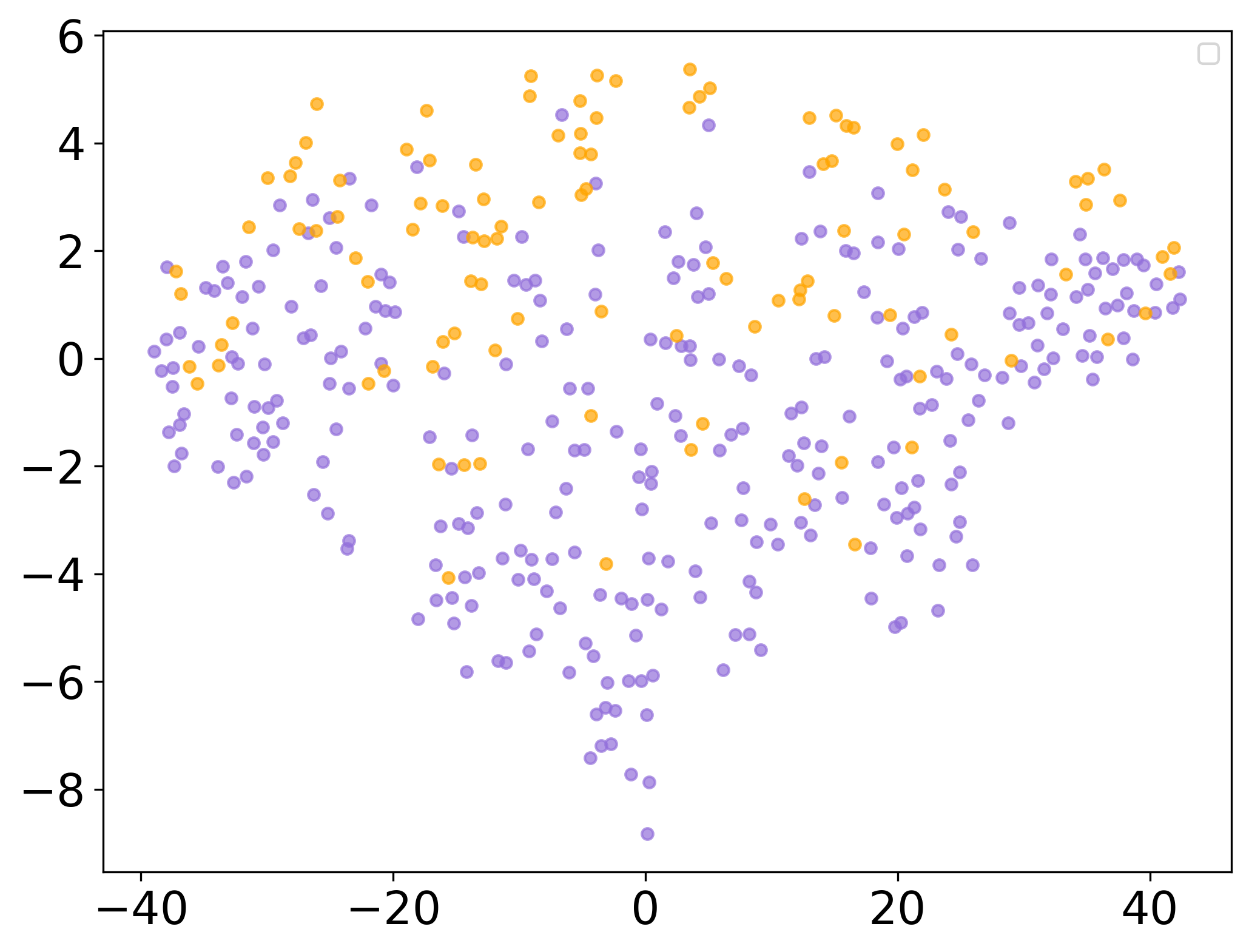}
        \caption{ NBA dataset}
        \label{fig:nba_plot}
    \end{subfigure}
    \hfill
    \begin{subfigure}[b]{0.24\textwidth}
        \centering
        \includegraphics[width=\textwidth]{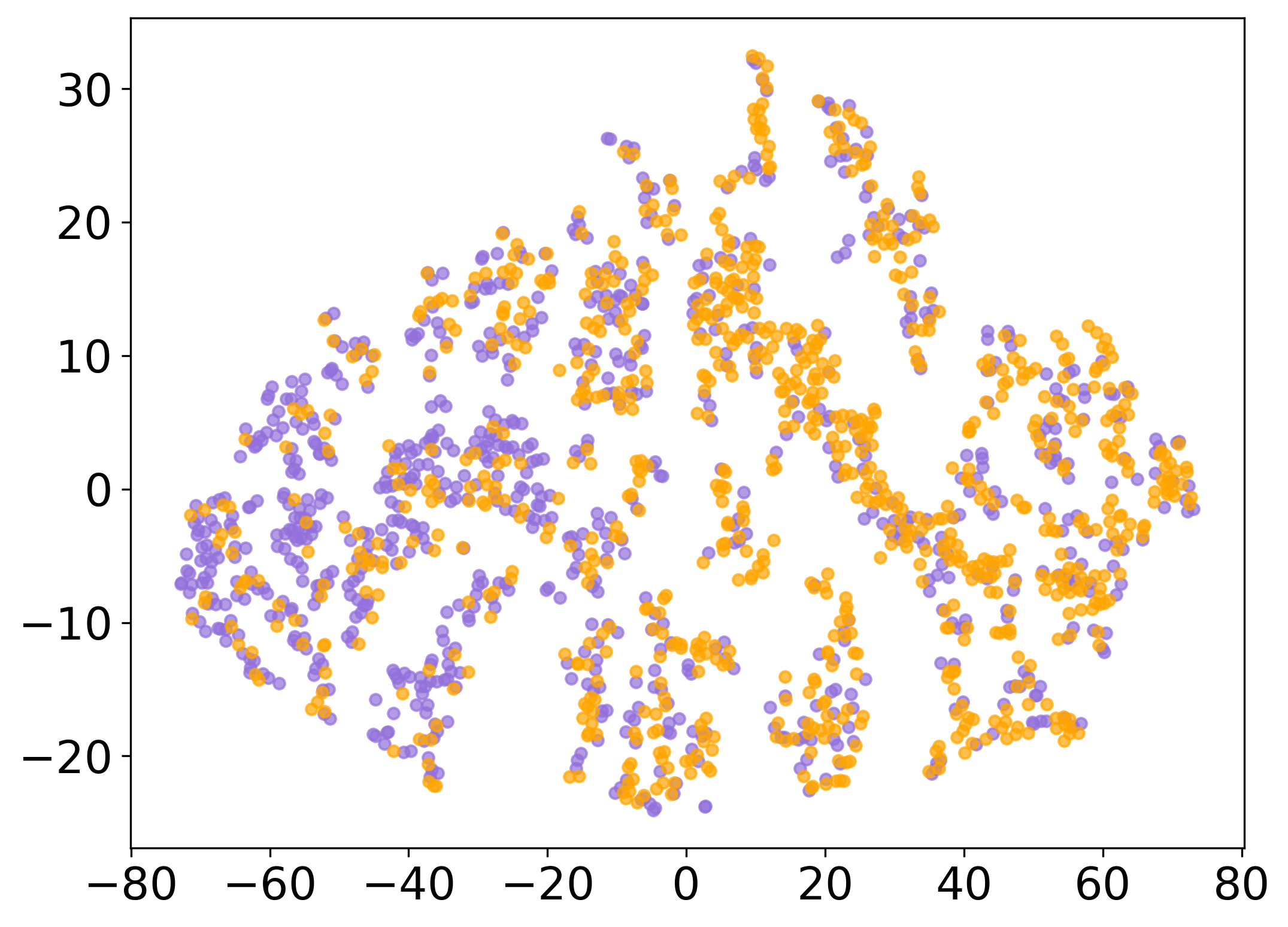}
        \caption{Occupation dataset}
        \label{fig:occupation_plot}
    \end{subfigure}
    
    \caption{Visualizations of pseudo-Sensitive attributes for NBA and Occupation datasets. Dot colors represent two distinct demographic groups based on sensitive attributes.
 }
    \label{fig:comparison_plots}
\end{figure}

\subsection{Runtime Analysis and Efficiency Evaluation}
    To address \textbf{RQ6}, we conduct a runtime analysis to evaluate the efficiency of Fairwos and its different variants. Specifically, these variants are explained previously in the ablation study: \textbf{Fwos w/o W}, \textbf{Fwos w/o E}, and \textbf{Fwos w/o F}. We also compare Fairwos with other baselines such as GNN, RemoveR, KSMOTE, FairRF, and FairGKD$\backslash \mathrm{S}$. The experiment is repeated 10 times on the NBA dataset, and the mean running time along with the standard deviation is presented in Fig. 8. From Fig. 8, we observe:

    \begin{itemize}
     \item Among the baselines, RemoveR has the shortest runtime for both GCN and GIN, as it removes some attributes and trains with fewer feature dimensions. Besides, KSMOTE and FairRF also show similar or slightly lower runtimes compared to the full Fairwos model. Although they achieve comparable computational efficiency, Fairwos provides additional advantages in enhancing fairness. FairGKD$\backslash S$ has a significantly longer training time compared to Fairwos. This is because FairGKD$\backslash S$ requires training two teacher models, which are then used to guide the student model through knowledge distillation. This multi-stage process adds significant computational overhead, whereas Fairwos uses a more efficient approach, resulting in a shorter runtime.

    \item  Among the Fairwos variants, the training time of Fairwos is longer than both \textit{Fwos w/o F} and \textit{Fwos w/o W}, which indicates that the weight updating module and fairness module both increase the training time. The reason is that the fairness module introduces additional computations to enhance fairness, and the weight updating module requires iterative adjustments during training. However, Fairwos has a significantly shorter runtime compared to \textit{Fwos w/o E}. This is because the encoder module reduces the feature dimensions, which avoids applying fairness promotion on each non-sensitive attribute and reduces the training time.

\end{itemize}

    \begin{figure}[t]
    \centering
    \includegraphics[width=0.4\textwidth]{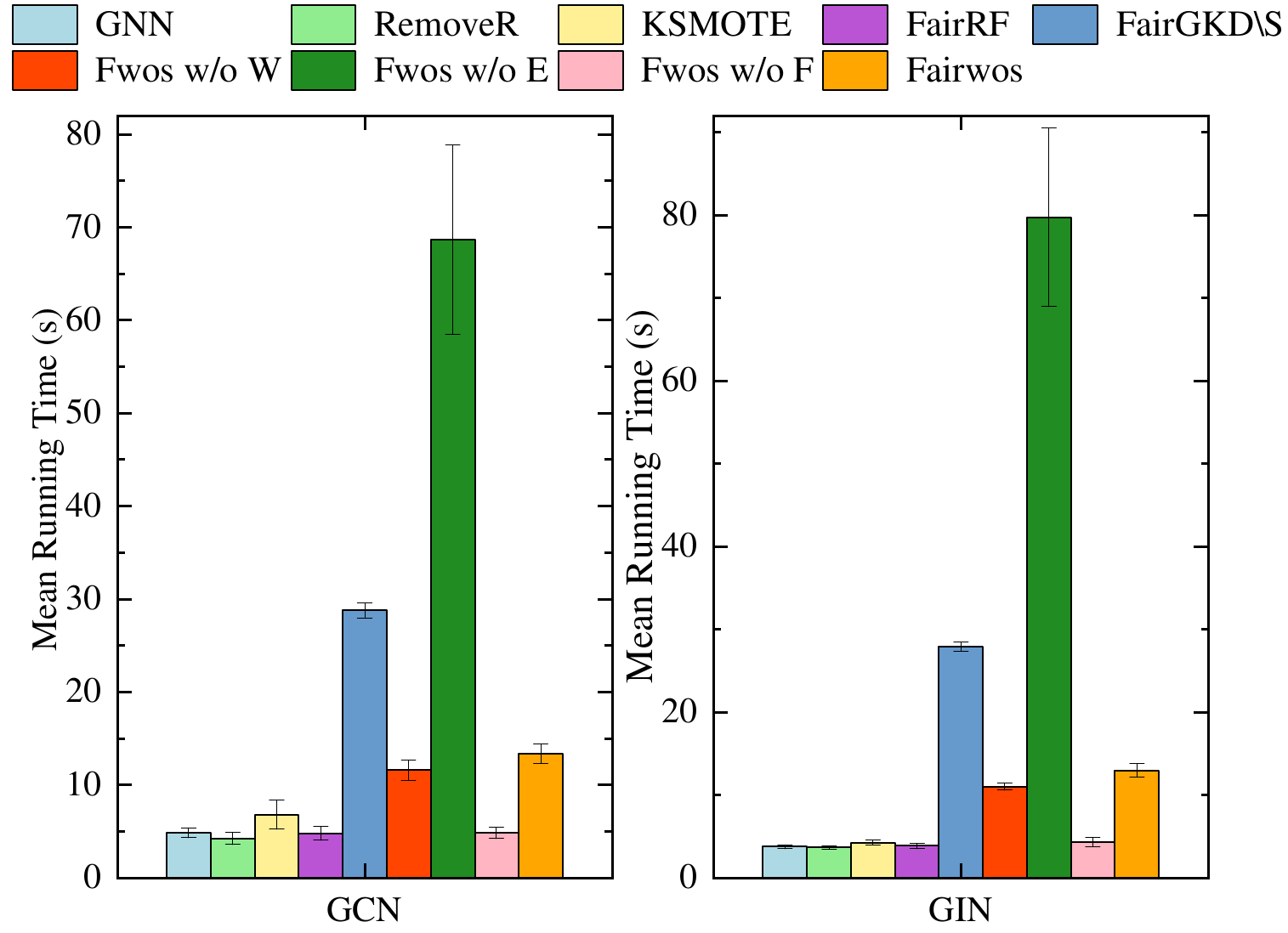}  
    \caption{Runtime comparison of Fairwos, Fairwos's variants, and baselines}  
    \label{fig:time_compare}  
\end{figure}
    
	\section{Related Work}
	
    \subsection{Graph Neural Networks} 
	
	Graph neural networks (GNNs)  are mainly divided into two categories: spectral GNNs and spatial-based GNNs \cite{ref39}. Spectral GNNs are based on spectral graph theory. They inherit the insights from the Convolution Neural Network (CNN) \cite{ref40}, which designs graph filters to extract task-related information from the input graph. Many works follow spectral GNNs for further improvement and extensions. In these works, a Graph Convolutional Network (GCN) \cite{ref30} is the most popular work that leverages graph convolution operation to capture the information. Different from spectral GNNs, spatial GNNs leverage message-passing mechanisms to aggregate the node’s representation. For example, the Graph Attention Network (GAT) \cite{ref41} introduces an attention mechanism, aligning importance weight for the neighbor’s attributes during the aggregation. GraphSAGE \cite{ref42} utilizes a neighbor sampling method to address the scalability of GCN. Graph Isomorphism Network (GIN) \cite{ref31} can capture the isomorphism of graphs. It enhances the expressive power of GNN by introducing learnable aggregation functions and update functions. Although GNNs show extensive utility and efficiency, they may inherit bias from the graph data, and the aggregation mechanism emphasizes the bias. There, it is essential to develop fair GNNs, especially for high-stake decision-making scenarios.

	\subsection{Fair graph neural networks}

	Fairness has received a lot of attention in machine learning, recently. Most of the existing methods are based on statistical fairness notation such as group fairness \cite{ref12} and individual fairness \cite{ref43}. They ensure that GNNs make a fair prediction about the protected group or individuals. To measure fairness, statistical parity \cite{ref36}, and equal opportunity \cite{ref29} are the widely applied metrics. FairGNN \cite{ref12} as an in-processing technique, refines the representation using adversarial learning to prevent an adversary from predicting the sensitive attribute accurately. Edits \cite{ref3.5} is a pre-processing technique, which eliminates the bias raised from two aspects including node attributes and network topology. Despite these methods achieving good performance, they focus on correlation and fail to mitigate the bias when encountering statistical anomalies. To address this limitation, counterfactual fairness leverages the causal theory to eliminate the root bias. There are some initial works to develop fairness-aware GNN with counterfactual fairness. NIFTY \cite{ref11} first perturbs the edge and the sensitive attribute to generate counterfactuals and maximizes the agreement between the representation learned from the original and counterfactuals. GEAR \cite{ref20} generates counterfactuals using GraphVAE \cite{ref44} and minimizes the disparity between the representations learned from the original data and the counterfactual. CAF \cite{ref15} finds counterfactuals from existing representation space with guidance of labels and sensitive attributes. Based on these counterfactuals, it designs a novel constraint to learn fair representation. 
	
	Our work is inherently different from existing works: (i) we focus on promoting fairness without sensitive attributes via graph counterfactuals; (ii) pseudo-sensitive attributes are adopted to generate graph counterfactuals as supervision for regularization.

	\subsection{Fairness without demographics}
	
	Due to the law and privacy issues, it is necessary to explore fairness-aware GNNs without sensitive attributes. This problem can be addressed in two directions. One direction is regularizing the model prediction using the pseudo-sensitive attributes. For example, Yan et al. \cite{ref37} generate the pseudo-sensitive attributes using a clustering algorithm. Zhao et al. \cite{ref23} leverage the non-sensitive attribute as the pseudo-sensitive attribute. Another direction focuses on the Min-Max fairness, which improves the subgroup with worst-case performance. For instance, Lahoti et al. \cite{ref22} achieve this fairness using adversarial reweighted learning to maximize the loss with the notation of computationally-identifiability. Chai et al. \cite{ref26} also follow a similar idea of Min-Max fairness and learn a generally fair representation of fully unsupervised training data. Chai et al. adopt knowledge distillation to learn fairness representation without sensitive attributes and demonstrate their method can be seen as an error-based reweighing. Zhu et al. \cite{ref27} leverage the knowledge distillation into graph data and propose to train the teacher model using partial data, which makes the student model learn more fair representation.

	Different from existing works, our paper focuses on fairness without sensitive attributes on graph data and eliminates the root of bias based on causal theory.

	\section{Conclusion and future work}
	
	In this paper, we study the problem of learning fair node representation without sensitive attributes. Due to privacy and legal issues, sensitive attributes are unavailable for training. Existing studies have demonstrated that bias also exists in this situation. Firstly, we analyze the relation between the sensitive attributes and the prediction. We find the sensitive bias is caused by the non-sensitive information which consists of the graph structure and the non-sensitive attributes. We learn the representation of the non-sensitive information with low dimensions and adopt all the processed attributes as pseudo-sensitive attributes. The aim is to ensure that the prediction does not over-depend on any of the pseudo-sensitive attributes. We generate the graph counterfactual by searching from the real data set with the guidance of pseudo-sensitive attributes and the prediction. Then, fairness is ensured by minimizing the disparity between the representations learned from the original graph and the graph counterfactual. To further balance fairness and utility, we assign an importance weight to each pseudo-sensitive attribute and introduce a weight updating mechanism to adjust these weights dynamically. This is because some pseudo-sensitive attributes are highly influential for the prediction and contribute to the bias, differently. Experiments on real-world datasets demonstrate the efficiency of our method in balancing utility and fairness while sensitive attributes are unavailable.

\bibliographystyle{unsrt}

\begin{thebibliography}{00}

\bibitem{Mhedhbi2022}

A. Mhednbi, and S. Salihoglu, "Modern Techniques for Querying Graph-Structured Relations: Foundations, System Implementations, and Open Challenges," in \textit{Proc. VLDB Endow. (VLDB)}, 2022, pp. 3762-3765.


\bibitem{Li2020}

Z. Li, X. Shen, Y. Jiao, X. Pan, P. Zhou, X. Meng, C. Yao, and J. Bu, "Hierarchical Bipartite Graph Neural Networks: Towards Large-Scale E-commerce Applications," in \textit{Proc. 36th IEEE Int. Conf. Data Eng. (ICDE)}, 2020, pp. 1677-1688.


\bibitem{Wang2021}
Z. Wang, T. Xia, R. Jiang, X Liu, K. Kim, X. Song, and R. Shibasaki, "Forecasting Ambulance Demand with Profiled Human Mobility via Heterogeneous Multi-Graph Neural Networks," in \textit{Proc. 37th IEEE Int. Conf. Data Eng. (ICDE)}, 2021, pp. 1751-1762.


\bibitem{Yuan2021}
H. Yuan, G. Li, Z. Bao, and L. Feng, "An Effective Joint Prediction Model for Travel Demands and Traffic Flows," in \textit{Proc. 37th IEEE Int. Conf. Data Eng. (ICDE)}, 2021, pp. 348-359.

\bibitem{Saifuddin2023}
K. M. Saifuddin, B. Bumgardner, F. Tanvir, and E. Akbas, "HyGNN: Drug-Drug Interaction Prediction via Hypergraph Neural Network," in \textit{Proc. 39th IEEE Int. Conf. Data Eng. (ICDE)}, 2023, pp. 1503-1516.

\bibitem{chen2024}
W. Chen, H. Wan, Y. Wu, S. Zhao, J. Cheng, Y. Li, and Y. Lin, "Local-global history-aware contrastive learning for temporal knowledge graph reasoning," in \textit{Proc. 40th IEEE Int. Conf. Data Eng. (ICDE)}, 2024, pp. 733-746

\bibitem{Peng2024}
T. Peng, W. Wu, H. Yuan, Z. Bao, P. Zhao, X. Yu, X. Lin, Y. Liang, and Y. Pu, "GraphRARE: Reinforcement Learning Enhanced Graph Neural Network with Relative Entropy," in \textit{Proc. 40th IEEE Int. Conf. Data Eng. (ICDE)}, 2024, pp. 2489-2502.

\bibitem{Chen2020}

H. Chen, H. Yin, X. Sun, T. Chen, B. Gabrys, and K. Musial, "Multi-level Graph Convolutional Networks for Cross-platform Anchor Link Prediction," in \textit{Proc. 20th ACM SIGKDD Conf. Knowl. Discov. Data Min (KDD)}, 2020, pp. 1503-1511.


\bibitem{Wang2024}

Z. Wang, S.Di, L. Chen, and X. Zhou, "Search to Fine-Tune Pre-Trained Graph Neural Networks for Graph-Level Tasks," \textit{Proc. 40th IEEE Int. Conf. Data Eng. (ICDE)}, 2024, pp. 2805-2819. 



\bibitem{Tsitsulin2023}

A. Tsitsulin, J. Perozzi, and E. M{\"{u}}ller, "Graph Clustering with Graph Neural Networks,"  \textit{J. Mach. Learn. Res.}, vol. 24, pp. 127:1-127:21, 2023.


\bibitem{ref10} 
H. Zhao, Q. Yao, and W. Tu, "Search to aggregate neighborhood for graph neural network," in \textit{Proc. 37th IEEE Int. Conf. Data Eng.}, 2021, pp. 552--563.


\bibitem{ref11} C. Agarwal, H. Lakkaraju, and M. Zitnik, "Towards a Unified Framework for Fair and Stable Graph Representation Learning," in \textit{Proc. Thirty-Seventh Conf. Uncertainty in Artificial Intelligence}, 2021, pp. 2114-2124.


\bibitem{ref12} E. Dai and S. Wang, "Learning Fair Graph Neural Networks With Limited and Private Sensitive Attribute Information," \textit{IEEE Trans. Knowl. Data Eng.}, vol. 35, no. 7, pp. 7103-7117, 2023.

\bibitem{ref13} Y. Wang, Y. Zhao, Y. Dong, H. Chen, J. Li, and T. Derr, "Improving Fairness in Graph Neural Networks via Mitigating Sensitive Attribute Leakage," in \textit{Proc. 28th ACM SIGKDD Conf. Knowl. Discov. Data Min.}, 2022, pp. 1938-1948.




\bibitem{ref14} C. Yang, J. Liu, Y. Yan, and C. Shi, "FairSIN: Achieving Fairness in Graph Neural Networks through Sensitive Information Neutralization," in \textit{Proc. 38th {AAAI} Conference on Artificial Intelligence, 36th Conference on Innovative Applications of Artificial Intelligence, 14th Symposium on Educational Advances in Artificial Intelligence}, 2024, pp. 9241-9249.

\bibitem{ref15} Z. Guo, J. Li, T. Xiao, Y. Ma, and S. Wang, "Towards Fair Graph Neural Networks via Graph Counterfactual," in \textit{Proc. 32nd ACM Int. Conf. Inf. Knowl. Manage.}, 2023, pp. 669-678.


\bibitem{ref16} J. Hang, Z. Dong, H. Zhao, X. Song, P. Wang, and H. Zhu, "Outside In: Market-aware Heterogeneous Graph Neural Network for Employee Turnover Prediction," in \textit{Proc. 15th ACM Int. Conf. Web Search Data Mining}, 2022, pp. 353-362.


\bibitem{ref17} D. Cheng, X. Wang, Y. Zhang, and L. Zhang, "Graph Neural Network for Fraud Detection via Spatial-Temporal Attention," \textit{IEEE Trans. Knowl. Data Eng.}, vol. 34, no. 8, pp. 3800-3813, 2022.

\bibitem{ref18} D. Loveland, J. Pan, A. F. Bhathena, and Y. Lu, "FairEdit: Preserving Fairness in Graph Neural Networks through Greedy Graph Editing," arXiv preprint arXiv: 2201.03681, 2022


\bibitem{ref19} M. A. Prado{-}Romero, B. Prenkaj, G. Stilo, F. Giannotti, "A Survey on Graph Counterfactual Explanations: Definitions, Methods, Evaluation, and Research Challenges," \textit{{ACM} Comput. Surv.}, vol 56, no. 7, pp. 171:1--171:37.




\bibitem{ref20} J. Ma, R. Guo, M. Wan, L. Yang, A. Zhang, and J. Li, "Learning Fair Node Representations with Graph Counterfactual Fairness," in \textit{Proc. 15th ACM Int. Conf. Web Search Data Mining}, 2022, pp. 695-703.




\bibitem{ref21} A. Coston, K. N. Ramamurthy, D. Wei, K. R. Varshney, S. Speakman, Z. Mustahsan, and S. Chakraborty, "Fair Transfer Learning with Missing Protected Attributes," in \textit{Proc. 2019 AAAI/ACM Conf. AI, Ethics, Soc.}, 2019, pp. 91-98.

\bibitem{ref22} P. Lahoti, A. Beutel, J. Chen, K. Lee, F. Prost, N. Thain, X. Wang, and E. H. Chi, "Fairness without Demographics through Adversarially Reweighted Learning," in \textit{Advances in Neural Inf. Process. Syst. 33}, 2020.

\bibitem{ref23} T. Zhao, E. Dai, K. Shu, and S. Wang, "Towards Fair Classifiers Without Sensitive Attributes: Exploring Biases in Related attributes," in \textit{Proc. 15th ACM Int. Conf. Web Search Data Mining}, 2022, pp. 1433-1442.





\bibitem{ref24} P. Voigt and A. Von dem Bussche, \emph{The EU General Data Protection Regulation (GDPR)}. Cham: Springer International Publishing, 2017.





\bibitem{ref25} H. Zhu, E. Dai, H. Liu, and S. Wang, "Learning fair models without sensitive attributes: A generative approach," \textit{Neurocomputing}, vol. 561, pp. 126841, 2023. 

\bibitem{ref26} J. Chai and X. Wang, "Self-Supervised Fair Representation Learning without Demographics," in \textit{Advances in Neural Inf. Process. Syst. 35}, 2022.

\bibitem{ref27} Y. Zhu, J. Li, L. Chen, and Z. Zheng, "The Devil is in the Data: Learning Fair Graph Neural Networks via Partial Knowledge Distillation," in \textit{Proc. 17th ACM Int. Conf. Web Search Data Mining}, Merida, Mexico, 2024, pp. 1012-1021, doi: 10.1145/3616855.3635768.


\bibitem{ref28} C. Dwork, M. Hardt, T. Pitassi, O. Reingold, and R. S. Zemel, "Fairness through awareness," in \textit{Proc. Innovations in Theoretical Computer Science 2012}, Cambridge, MA, USA, 2012, pp. 214-226, doi: 10.1145/2090236.2090255.

\bibitem{ref29} M. Hardt, E. Price, and N. Srebro, "Equality of Opportunity in Supervised Learning," in \textit{Proc. Advances in Neural Inf. Process. Syst. 29}, 2016, pp. 3315-3323. 

\bibitem{ref30} G. V. Demirci, A. Haldar, and H. Ferhatosmanoglu, "Scalable Graph Convolutional Network Training on Distributed-Memory Systems," \textit{Proc. VLDB Endow.}, vol. 16, no. 4, pp. 711-724, 2022.

\bibitem{ref31} K. Xu, W. Hu, J. Leskovec, and S. Jegelka, "How Powerful are Graph Neural Networks?," in \textit{Proc. 7th Int. Conf. Learn. Representations}, 2019.

\bibitem{ref32} K. L. Jordan and T. L. Freiburger, "The Effect of Race/Ethnicity on Sentencing: Examining Sentence Type, Jail Length, and Prison Length," \textit{J. Ethn. Crim. Justice}, vol. 13, no. 3, pp. 179-196, 2015.

\bibitem{ref33} I.-C. Yeh and C.-H. Lien, "The comparisons of data mining techniques for the predictive accuracy of probability of default of credit card clients," \textit{Expert Syst. Appl.}, vol. 36, no. 2, pp. 2473-2480, 2009.



\bibitem{ref34} L. Takac and M. Zabovsky, "Data analysis in public social networks," in \textit{Int. Sci. Conf. Workshop Present Day Trends Innov.}, 2012, vol. 1, no. 6.



\bibitem{ref35} X. Qian, Z. Guo, J. Li, H. Mao, B. Li, S. Wang, and Y. Ma, "Addressing Shortcomings in Fair Graph Learning Datasets: Towards a New Benchmark," \textit{CoRR}, vol. abs/2403.06017, 2024.

\bibitem{ref36} R. S. Zemel, Y. Wu, K. Swersky, T. Pitassi, and C. Dwork, "Learning Fair Representations," in \textit{Proc. 30th Int. Conf. Mach. Learn.}, 2013, pp. 325-333.

\bibitem{ref37} S. Yan, H.-T. Kao, and E. Ferrara, "Fair Class Balancing: Enhancing Model Fairness without Observing Sensitive Attributes," in \textit{Proc. 29th ACM Int. Conf. Inf. Knowl. Manage.}, 2020, pp. 1715-1724.

\bibitem{ref38} L. Van, and G. Hinton, "Visualizing data using t-SNE," \textit{J. Mach. Learn. Res.}, vol. 9, pp.2579-2605.


\bibitem{ref39} Z. Wu, S. Pan, F. Chen, G. Long, C. Zhang, and P. S. Yu, "A Comprehensive Survey on Graph Neural Networks," \textit{IEEE Trans. Neural Networks Learn. Syst.}, vol. 32, no. 1, pp. 4-24, 2021.




\bibitem{ref40} J. Bruna, W. Zaremba, A. Szlam, and Y. LeCun, "Spectral Networks and Locally Connected Networks on Graphs," in \textit{Proc. 2nd Int. Conf. Learn. Representations}, 2014.









\bibitem{ref41} P. Velickovic, G. Cucurull, A. Casanova, A. Romero, P. Liò, and Y. Bengio, "Graph Attention Networks," in \textit{Proc. 6th Int. Conf. Learn. Representations}, 2018.

\bibitem{ref42} W. L. Hamilton, Z. Ying, and J. Leskovec, "Inductive Representation Learning on Large Graphs," in \textit{Advances in Neural Inf. Process. Syst. 30}, 2017, pp. 1024-1034.



\bibitem{ref43} Y. Dong, J. Kang, H. Tong, and J. Li, "Individual Fairness for Graph Neural Networks: A Ranking based Approach," in \textit{Proc. 27th ACM SIGKDD Conf. Knowl. Discov. Data Min.}, 2021, pp. 300-310.



\bibitem{ref3.5} Y. Dong, N. Liu, B. Jalaian, and J. Li, "EDITS: Modeling and Mitigating Data Bias for Graph Neural Networks," in \textit{Proc. ACM Web Conf.}, 2022, pp. 1259-1269.



\bibitem{ref44} T. N. Kipf and M. Welling, "Variational Graph Auto-Encoders," \textit{CoRR}, vol. abs/1611.07308, 2016.





\end{thebibliography}

\vspace{12pt}

\end{document}